\DeclareMathOperator*{\argmin}{arg\,min}
\newtheorem*{rep@theorem}{\rep@title}
\newcommand{\newreptheorem}[2]{%
\newenvironment{rep#1}[1]{%
 \def\rep@title{#2 \ref{##1}}%
 \begin{rep@theorem}}%
 {\end{rep@theorem}}}
\newtheorem{theorem}{Theorem}
\newtheorem{lemma}{Lemma}
\newtheorem{prop}{Proposition}
\newtheorem{corollary}{Corollary}
\newtheorem{definition}{Definition}
\begin{document}
%\title{The Evolution of the Optimization Landscape of Logistic Models under $\alpha$-loss} 
%\title{A Characterization of the Optimization Landscape of $\alpha$-loss in the Logistic Model}
%\title{On the Optimization Landscape of $\alpha$-loss \\ in the Logistic Model}
\title{On the $\alpha$-loss Landscape in the Logistic Model}
%\title{Optimization Landscape of $\alpha$-loss: Logistic Model}
%\title{On the Evolution of the Optimization Landscape of $\alpha$-loss for the Logistic Model in $\alpha$}
% %%% Single author, or several authors with same affiliation:
% \author{%
%   \IEEEauthorblockN{Stefan M.~Moser}
%   \IEEEauthorblockA{ETH Zürich\\
%                     ISI (D-ITET)\\
%                     CH-8092 Zürich, Switzerland\\
%                     Email: moser@isi.ee.ethz.ch}
% }

% \author{%
%   \IEEEauthorblockN{Tyler Sypherd\IEEEauthorrefmark{1},
%                     Mario Diaz\IEEEauthorrefmark{1}\IEEEauthorrefmark{2},
%                     Lalitha Sankar\IEEEauthorrefmark{1},
%                     and Peter Kairouz\IEEEauthorrefmark{3}}
%   \IEEEauthorblockA{\IEEEauthorrefmark{1}%
%                      Arizona State University, \{tsypherd,mdiaztor,lsankar\}@asu.edu}
%   \IEEEauthorblockA{\IEEEauthorrefmark{2}%
%                      Centro de Investigaci\'on en Matem\'aticas A.C.,
%                      diaztorres@cimat.mx}
%   \IEEEauthorblockA{\IEEEauthorrefmark{3}%
%                      Google AI,\ kairouz@google.com}
% }

\author{%
%\small
Tyler Sypherd${}^\ast$, Mario Diaz${}^\dagger$,  Lalitha Sankar${}^\ast$, and Gautam Dasarathy${}^\ast$ \\%, and Peter Kairouz${}^\ddagger$\\
%\small 
${}^\ast$ Arizona State University, \texttt{\{tsypherd,lsankar,gautamd\}@asu.edu} \\
%\small 
${}^\dagger$ Universidad Nacional Aut\'{o}noma de M\'{e}xico, \texttt{mario.diaz@sigma.iimas.unam.mx} \\
%
%${}^\ddagger$ Google AI,\ kairouz@google.com \\
}
\maketitle

\begin{abstract}
We analyze the optimization landscape of a recently introduced tunable class of loss functions called $\alpha$-loss, $\alpha \in (0,\infty]$, in the logistic model.
This family encapsulates the exponential loss ($\alpha = 1/2$), the log-loss ($\alpha = 1$), and the 0-1 loss ($\alpha = \infty$) and contains compelling properties that enable the practitioner to discern among a host of operating conditions relevant to emerging learning methods. 
Specifically, we study the evolution of the optimization landscape of $\alpha$-loss with respect to $\alpha$ using tools drawn from the study of strictly-locally-quasi-convex functions in addition to geometric techniques. We interpret these results in terms of optimization complexity via normalized gradient descent.
%\textcolor{blue}{Todo: 1. Make sure we emphasize that THM1 is bounds and denotes the worst-case when it could be a lot better.
%2. Make sure we introduce logistic models in the main text and compare and contrast with logistic regression.}
\end{abstract}

%\section{Introduction}
% \textcolor{blue}{Backpropagation equations in order to comment on robustness?}

% \textcolor{red}{biases in the NN?}

% \textcolor{blue}{Convexity in LR for small $\alpha$?}

% \textcolor{red}{cite john barron}

% \textcolor{blue}{Robustness idea with landscape "squashing"}

% \textcolor{red}{To do: Compute Lipschitz of $R_{\alpha}$ in $\alpha$. Compute Lipschitz of $\nabla R_{\alpha}$ in terms of $\alpha$. We have to take the gradient for the second one which will be more difficult.}

% \textcolor{blue}{Guaranteed Recovery of One-Hidden-Layer Neural
% Networks via Cross Entropy
% Haoyu Fu, Yuejie Chi, Yingbin Liang}
%%!TEX root = main.tex
%
\section{Introduction}
The performance of a classification algorithm, in terms of accuracy, tractability, and convergence guarantees crucially depends on the choice of the loss function during training. Consider a feature vector $X \in \mathcal{X}$, an unknown finite-valued label $Y \in \mathcal{Y}$, and a hypothesis $h:\mathcal{X} \rightarrow \mathcal{Y} $. The canonical $0$-$1$ loss, given by $\mathbbm{1}[h(X) \neq Y]$, is considered an ideal loss function that captures the probability of incorrectly guessing the true label $Y$ using $h(X)$. However, since the $0$-$1$ loss is neither continuous nor differentiable, its applicability in state-of-the-art learning algorithms is highly restricted.

\textit{Surrogate} loss functions that approximate the $0$-$1$ loss such as log-loss, exponential loss, sigmoid loss, etc. have generated much interest \cite{bartlett2006convexity,masnadi2009design,lin2004note,nguyen2009,rosasco2004loss,nguyen2013algorithms,singh2010loss,tewari2007consistency,zhao2010convex,barron2019general,lin2017focal,mei2018landscape,sypherd2019tunable,sypherd2019alpha}. 
%Recently, there has been renewed interest in alternative losses for classification \cite{janocha2017loss,nguyen2013algorithms,zhao2010convex,mei2018landscape,liao2018tunable,barron2019general} other than the oft-used log-loss. 
%\textcolor{blue}{Tunable families}
While early research was predominantly focused on convex losses \cite{bartlett2006convexity,rosasco2004loss,nguyen2009,lin2004note}, more recent works propose the use of non-convex losses as a means to moderate the behavior of an algorithm \cite{mei2018landscape,nguyen2013algorithms,masnadi2009design,barron2019general}. 
% This is due to the increased robustness non-convex losses offer over convex losses \cite{mei2018landscape,barron2019general} and the fact that modern learning models (e.g., deep learning) are inherently non-convex as they involve vast functional compositions \cite{goodfellow2016deep}. 
This is primarily due to the fact that modern learning models (e.g., deep learning) are inherently non-convex as they involve vast functional compositions \cite{goodfellow2016deep}; further, non-convex losses are also believed to provide increased robustness over convex losses \cite{mei2018landscape,barron2019general,masnadi2009design,nguyen2013algorithms}.
%
%the increased robustness non-convex losses offer over convex losses \cite{mei2018landscape,barron2019general} and the fact that modern learning models (e.g., deep learning) are inherently non-convex as they involve vast functional compositions \cite{goodfellow2016deep}. 

%Furthermore, modern learning techniques such as deep learning are inherently non-convex due to their vast functional compositions. 
There have been numerous theoretical attempts to capture the non-convex optimization landscape
%In an effort to study non-convex learning models, there is in interest in capturing the behavior of the optimization landscape, 
which is the loss surface induced by the learning model, underlying distribution, and the surrogate loss function itself \cite{mei2018landscape,hazan2015beyond,li2018visualizing,nguyen2017loss,fu2018guaranteed,liang2018understanding,engstrom2019exploring,chaudhari2018deep}.
%To this end, Hazan \textit{et al.} \cite{hazan2015beyond} introduce the notion of \textit{Strict-Local-Quasi-Convexity} (SLQC) to parametrically quantify approximately quasi-convex functions, and provide convergence guarantees for the Normalized Gradient Descent (NGD) algorithm (originally introduced in \cite{nesterov1984minimization}) under such functions.
Notably, Hazan \textit{et al.} \cite{hazan2015beyond} propose the notion of \textit{Strict-Local-Quasi-Convexity} (SLQC) to parametrically quantify quasi-convex functions, and provide convergence guarantees for the efficiency of the Normalized Gradient Descent (NGD) algorithm (originally introduced in \cite{nesterov1984minimization}) optimizing such functions.

%This family encapsulates the exponential loss ($\alpha = 1/2$), the log-loss ($\alpha = 1$), and the 0-1 loss ($\alpha = \infty$), and contains compelling properties that a practitioner can use to choose among a host of operating conditions relevant to emerging learning methods. 

In \cite{sypherd2019tunable}, Sypherd \textit{et al.} introduce a tunable class of loss functions called $\alpha$-loss, $\alpha\in [1,\infty]$, which includes log-loss ($\alpha = 1$) and the soft 0-1 loss ($\alpha=\infty$); they prove that it satisfies many desirable properties for surrogate losses including the notion of classification-calibration \cite{bartlett2006convexity}. In the extended version of this paper \cite{sypherd2019alpha}, Sypherd \textit{et al.} extend $\alpha$-loss to the range $\alpha\in (0,\infty]$ which includes exponential loss ($\alpha = 1/2$); they prove that the extended range of $\alpha$ also induces classification-calibrated losses and has desirable convexity characteristics.
Further, they show experimentally that, relative to log-loss ($\alpha=1$), $\alpha>1$ achieves increased robustness to noise while $\alpha<1$ achieves better accuracy for imbalanced classes. 
%an unpublished archival manuscript, the authors extend this loss for all $\alpha\in (0,\infty]$ ($\alpha = 1/2$ is soft exponential loss) and 
%use SLQC to understand the landscape properties as a function of $\alpha$ 
%in the logistic model; recall that the logistic model denotes a family of sigmoid classifiers which act on the inner product between the parameter and feature vectors \cite{friedman2001elements}. 
%which act on the inner product between the parameter and feature vectors 

In this paper, we present three main contributions for the logistic model (the hypothesis class of sigmoid soft classifiers \cite{friedman2001elements}): (i) we show that the expected risk under $\alpha$-loss is strongly convex for $\alpha\in(0,1]$ (under mild distribution assumptions); (ii) we provide, in a quantitative manner, bounds for the evolution of the SLQC parameters of the expected risk of $\alpha$-loss as $\alpha$ increases, which is most useful in a neighborhood of $\alpha_{0} = 1$ when combined with the first result; (iii) we study a saturation effect of $\alpha$-loss in the logistic model, \textit{i.e.}, how the distance between the expected risk for $\alpha\geq1$ quickly resembles the expected risk of $\alpha=\infty$.
%Combining the first two results allows us to provide precise bounds on the computational complexity, i.e., the number of iterations of NGD required to descend to a local (or global for $\alpha\le 1$) minima. 
As a byproduct of the analysis in the second point, we prove an equivalent form of the SLQC definition that can be of independent interest.
Based on our theoretical analysis into the evolution of the optimization landscape with respect to $\alpha$, we ultimately posit that there
is a small range of $\alpha$ useful to the practitioner, thereby drastically reducing the search for the optimal value of $\alpha$ in practice. Further, via the saturation effect, we argue that this narrow search in $\alpha$ is sufficient for the logistic model. 

\section{Preliminaries}
%%!TEX root = main.tex
%\section{}
%In this section, we review what is known about $\alpha$-loss, quickly summarize \cite{hazan2015beyond}, and present a few results of Rademacher complexity which we will need in the sequel. 
%\subsection{}
%$0$\mdash{-}$1$
\subsection[binary classification]{$\alpha$-loss Definition and Interpretations} \label{section:marginprops}
% \begin{comment}
% Let $\mathcal{P}(\mathcal{Y})$ be the set of probability distributions over $\mathcal{Y}$. For $\alpha \in [1,\infty]$, Liao et al. \cite{liao2018tunable} define $\alpha$-loss $l^{\alpha}:\mathcal{Y} \times \mathcal{P}(\mathcal{Y}) \rightarrow \mathbb{R}_{+}$ as
% \begin{equation} \label{originalalphaloss}
% l^{\alpha}(y,P_{Y}) := \begin{cases} 
%       -\log{P_{Y}(y)} & \alpha = 1, \\
%       \frac{\alpha}{\alpha - 1}[1 - P_{Y}(y)^{1 - 1/\alpha}] & \alpha \in (1,\infty), \\
%       1 - P_{Y}(y) & \alpha = \infty.
% \end{cases}
% \end{equation} 
% Note that for $(y,P_{Y})$ fixed, $l^{\alpha}(y,P_{Y})$ is continuous in $\alpha$.
% \end{comment}
%Liao et al. originally defined $\alpha$-loss for $\alpha \in [1,\infty]$ \cite{liao2018tunable}. Consequently, Sypherd et al. \cite{sypherd2019alpha} considered $\alpha$-loss in the binary classification setting. 
%We begin by providing the definition of $\alpha$-loss.
%
%, originally introduced by Liao \textit{et al.} \cite{liao2018tunable} for $\alpha \in [1,\infty]$, studied in the learning setting by Sypherd, \textit{et al.} in \cite{sypherd2019tunable}, extended to $\alpha \in (0,\infty]$ by Sypherd, \textit{et al.} in \cite{sypherd2019alpha}. 
\begin{definition}
Let $\mathcal{P}(\mathcal{Y})$ be the set of probability distributions over $\mathcal{Y}$. For $\alpha \in (0,\infty]$, we define $\alpha$-loss for $\alpha \in (0,1) \cup (1,\infty)$, $l^{\alpha}:\mathcal{Y} \times \mathcal{P}(\mathcal{Y}) \rightarrow \mathbb{R}_{+}$ as
\begin{equation} \label{eq:def1}
l^{\alpha}(y,P_{Y}) := \frac{\alpha}{\alpha - 1}\left[1 - P_{Y}(y)^{1 - 1/\alpha}\right],
\end{equation} 
and, by continuous extension, $l^{1}(y,P_{Y}) := -\log{P_{Y}(y)}$ and $l^{\infty}(y,P_{Y}) := 1 - P_{Y}(y)$.
\end{definition}
%
%\textcolor{red}{One can conceive of many different ways to continuously interpolate between the log-loss and the 0-1 loss, this however has information theoretic interpretations.}
Note that $l^{1/2}(y,P_{Y}) := P_{Y}^{-1}(y) - 1$. We refer to $l^{1/2}$ as the soft exponential loss and $l^{\infty}$ as the soft 0-1 loss; observe that $l^{1}$ recovers log-loss. 
For $(y,P_{Y})$ fixed, note that $l^{\alpha}(y,P_{Y})$ is continuous in $\alpha$. 
The above definition of $\alpha$-loss presents a tunable class of loss functions that value the probabilistic estimate of the label differently as a function of $\alpha$.

%It is possible to use $\alpha$-loss in the context of classification. 
Consider random variables $(X,Y) \sim P_{X,Y}$. Observing $X$, one can construct an estimate $\hat{Y}$ of $Y$ such that $Y - X - \hat{Y}$ form a Markov chain. One can use expected $\alpha$-loss $\mathbb{E}_{X,Y}[l^{\alpha}(Y,P_{\hat{Y}|X})]$, hence called $\alpha$-risk, to quantify the effectiveness of the estimated posterior $P_{\hat{Y}|X}$. In particular, 
\begin{equation} \label{inf1}
\mathbb{E}_{X,Y}\left[l^{1}(Y,P_{\hat{Y}|X})\right] = \mathbb{E}_{X}\left[H(P_{Y|X=x},P_{\hat{Y}|X=x})\right],
\end{equation}
where $H(P,Q) := H(P) + D_{\textnormal{KL}}(P\|Q)$ is the cross-entropy between $P$ and $Q$. Similarly,
\begin{equation} \label{inf2}
\mathbb{E}_{X,Y}[l^{\infty}(Y,P_{\hat{Y}|X})] = \mathbb{P}[Y \neq \hat{Y}],
\end{equation}
i.e., the expected $\alpha$-loss for $\alpha = \infty$ equals the probability of error. 
Recall that the expectation of the 0-1 loss is also the probability of error \cite{shalev2014understanding}; thus, we say that $\alpha$-loss for $\alpha = \infty$ is a soft version of the 0-1 loss.
%It can be shown that the expected $\alpha$-loss is continuous in $\alpha$, i.e., \eqref{inf1} and \eqref{inf2} result from the continuous extensions for $\alpha=1$ and $\alpha=\infty$, respectively. Thus, we see that the extremal points of expected $\alpha$-loss are expected log-loss and probability of error.
%As shown in the sequel, the extension to $\alpha \in (0,1)$ is very useful in certain settings.
%
%Suppose that the feature-label variable pair $(X,Y) \sim P_{X,Y}$. Observing $X$, the goal in classification is to construct an estimate $\hat{Y}$ of $Y$. % such that $Y - X - \hat{Y}$ form a Markov chain. 
%Upon inspecting  \eqref{eq:def1}, one may observe that the expected $\alpha$-loss  $\mathbb{E}_{X,Y}[l^{\alpha}(Y,P_{\hat{Y}|X})]$, henceforth referred to as $\alpha$-risk, quantifies the effectiveness of the estimated posterior $P_{\hat{Y}|X}$. 
%The following is given by Liao et al. in \cite{liao2018tunable}.
The following result by Liao \textit{et al.} provides an explicit characterization of the risk-minimizing posterior under $\alpha$-loss.
\begin{prop}[{\!\cite[Lemma 1]{liao2018tunable}}] \label{Prop:Liao}
For each $\alpha\in [1,\infty]$, the minimal $\alpha$-risk is
\begin{equation}
\min_{P_{\hat{Y}|X}} \mathbb{E}_{X,Y}\big[ l^{\alpha}(Y,P_{\hat{Y}|X})\big] = \frac{\alpha}{\alpha -1}\left(1 - e^{\frac{1-\alpha}{\alpha}H_{\alpha}^{A}(Y|X)}\right),
 \end{equation}
 where $H_{\alpha}^{A}(Y|X) = \dfrac{\alpha}{1 - \alpha} \log{\sum\limits_{y}\Big(\sum\limits_{x} P_{X,Y}(x,y)^{\alpha}\Big)^{1/\alpha}}$ is the Arimoto conditional entropy of order $\alpha$ \cite{arimoto1977information}. The resulting unique minimizer, $P^{*}_{\hat{Y}|X}(y|x)$, is the $\alpha$-tilted true posterior
 \begin{equation} \label{eq:tilteddistribution}
 P^{*}_{\hat{Y}|X}(y|x) = \dfrac{P_{Y|X}(y|x)^{\alpha}}{\sum\limits_{y} P_{Y|X} (y|x)^{\alpha}}.
\end{equation}
\end{prop}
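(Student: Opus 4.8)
The plan is to exploit the fact that the $\alpha$-risk decouples across the values of $X$. Writing $p_x(\cdot):=P_{Y|X}(\cdot\mid x)$ and $Q_x(\cdot):=P_{\hat Y|X}(\cdot\mid x)$, the tower rule gives
\begin{equation*}
\mathbb{E}_{X,Y}\big[l^\alpha(Y,P_{\hat Y|X})\big] = \mathbb{E}_X\left[\sum_y P_{Y|X}(y\mid X)\, l^\alpha\big(y, P_{\hat Y|X}(\cdot\mid X)\big)\right],
\end{equation*}
and since each conditional law $Q_x$ may be chosen independently for each $x$, it suffices to minimize, for a fixed but arbitrary $x$, the inner conditional risk over the probability simplex. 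For $\alpha\in(1,\infty)$, substituting \eqref{eq:def1} and using $\sum_y p_x(y)=1$ reduces this to \emph{maximizing} $\sum_y p_x(y)\,Q_x(y)^{1-1/\alpha}$ subject to $\sum_y Q_x(y)=1$, $Q_x\ge 0$ (the leading factor $\tfrac{\alpha}{\alpha-1}$ is positive).

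First I would guess the optimizer by forming the Lagrangian and setting its gradient to zero; the stationarity condition $\big(1-\tfrac1\alpha\big)p_x(y)Q_x(y)^{-1/\alpha}=\lambda$ yields $Q_x(y)\propto p_x(y)^\alpha$, which is exactly the $\alpha$-tilted posterior \eqref{eq:tilteddistribution}. To upgrade this guess to a rigorous bound with a clean equality condition — and thereby avoid fussing over the boundary of the simplex — I would instead invoke H\"older's inequality with conjugate exponents $\alpha$ and $\alpha/(\alpha-1)$:
\begin{equation*}
\sum_y p_x(y)\,Q_x(y)^{\frac{\alpha-1}{\alpha}} \le \Big(\sum_y p_x(y)^\alpha\Big)^{1/\alpha}\Big(\sum_y Q_x(y)\Big)^{\frac{\alpha-1}{\alpha}} = \Big(\sum_y p_x(y)^\alpha\Big)^{1/\alpha}.
\end{equation*}
Equality in H\"older holds precisely when $p_x(y)^\alpha\propto Q_x(y)$, which recovers \eqref{eq:tilteddistribution}; since the normalized proportionality determines $Q_x$ uniquely, this simultaneously pins down the optimal value and establishes uniqueness of the minimizer.

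Substituting the optimal value back gives the minimal conditional risk $\tfrac{\alpha}{\alpha-1}\big[1-(\sum_y p_x(y)^\alpha)^{1/\alpha}\big]$; taking $\mathbb{E}_X$ and rewriting $P_{Y|X}(y\mid x)=P_{X,Y}(x,y)/P_X(x)$ collapses the $P_X(x)$ factors, and the resulting expression matches $\exp\!\big(\tfrac{1-\alpha}{\alpha}H_\alpha^A(Y|X)\big)$ from the stated definition, yielding the claimed closed form. Finally I would dispatch the endpoints $\alpha=1$ and $\alpha=\infty$ either by continuity of $l^\alpha$ in $\alpha$ or directly: at $\alpha=1$ the inner problem is $\min_{Q_x}\sum_y p_x(y)\big(-\log Q_x(y)\big)$, solved by $Q_x=p_x$ via Gibbs' inequality (so the risk is the conditional entropy $H(Y|X)$), and at $\alpha=\infty$ it is $\max_{Q_x}\sum_y p_x(y)Q_x(y)$, solved by placing all mass on $\argmax_y p_x(y)$ (the MAP rule, giving the probability of error). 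I expect the only genuine subtlety to be the uniqueness claim at the boundary of the simplex: the H\"older equality condition settles it cleanly for $\alpha\in(1,\infty)$, whereas at $\alpha=\infty$ uniqueness can fail under ties in the posterior mode, so that case warrants a separate remark.
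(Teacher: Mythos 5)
Your proposal should be judged as a self-contained reconstruction, because this paper contains no proof of Proposition~\ref{Prop:Liao} at all: the result is imported from \cite[Lemma 1]{liao2018tunable}, with only the remark that the argument extends to $\alpha\in(0,1)$. As such a reconstruction, your argument is correct and follows the natural (Arimoto-style) route that results of this type admit: the tower-rule decoupling across $x$ is valid since $P_{\hat Y|X}(\cdot|x)$ can be chosen freely for each $x$; the reduction for $\alpha\in(1,\infty)$ to maximizing $\sum_y p_x(y)Q_x(y)^{1-1/\alpha}$ over the simplex is right (the prefactor $\tfrac{\alpha}{\alpha-1}$ is positive there); the H\"older step with exponents $\alpha$ and $\alpha/(\alpha-1)$ is applied correctly, its equality condition $Q_x\propto p_x^{\alpha}$ simultaneously identifies the $\alpha$-tilted posterior, shows the bound is attained, and gives uniqueness; and the endpoint cases $\alpha=1$ (Gibbs) and $\alpha=\infty$ (MAP) are dispatched properly.

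Two remarks on details you flagged or touched. First, your computation yields $\sum_x\bigl(\sum_y P_{X,Y}(x,y)^{\alpha}\bigr)^{1/\alpha}$, i.e., $\exp\bigl(\tfrac{1-\alpha}{\alpha}H_{\alpha}^{A}(Y|X)\bigr)$ for the \emph{standard} Arimoto conditional entropy, whereas the formula displayed in the paper has the two summations transposed (inner sum over $x$, outer over $y$); that appears to be a transcription slip in the paper — what it writes is, under the usual convention, $H_{\alpha}^{A}(X|Y)$ — and your derivation is the one consistent with the intended statement and with the cited source. Second, your caveat about $\alpha=\infty$ is genuine and worth keeping: when the posterior mode is not unique, every $P_{\hat Y|X}(\cdot|x)$ supported on $\argmax_y P_{Y|X}(y|x)$ attains the minimum, so the uniqueness claim in the proposition, read literally for $\alpha=\infty$, fails under ties; uniqueness as stated is clean only for $\alpha\in[1,\infty)$, with the $\alpha=\infty$ case understood as a limit.
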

%
% \begin{figure} 
% %\centerline{\includegraphics[width=.7457\linewidth]{newplotcopy.pdf}}
% \centerline{\includegraphics[width=1\linewidth]{pray.eps}}
% \caption{(a) $\alpha$-loss, as a function of $P_{\hat{Y}|X}(y|x)$; (b) (20,0.5)-Binomial distribution under optimal strategy for different values of $\alpha$.}
% \label{Fig:AlphaLoss_plot}
% \end{figure}
%For $0 < \alpha < 1$, $\tilde{l}^{\alpha}$ is a convex upper-bound for logistic-loss (margin-based log-loss). Indeed, the gradients of $\alpha$-loss increase with decreasing $\alpha$. 
%
%In the binary classification literature, margin-based loss functions are predominantly discussed \cite{bartlett2006convexity,masnadi2009design,liao2018tunable}. 
%The margin is a product of the label $y \in \mathcal{Y}$ from a given input pair $(x,y)$ and a given classification function $f$, $yf(x)$.
The proof of Proposition~\ref{Prop:Liao} can be found in \cite{liao2018tunable} and is easily extended to the case where $\alpha \in (0,1)$.
For $\alpha=\infty$, minimizing the corresponding risk leads to making a single guess on the most likely label; on the other hand, for $\alpha=1$, such a risk minimization involves minimizing the average log-loss, and therefore, obtaining the true posterior belief. %over all $y$ for a given observation $x$. 

We note that $\alpha$-loss exhibits different operating conditions through the choice of $\alpha$; see \cite{sypherd2019alpha} for experimental consideration of robustness and class imbalance trade-offs. With respect to \eqref{eq:tilteddistribution}, as $\alpha$ increases from 1 to $\infty$, $\alpha$-loss increasingly limits the effect of low probability outcomes; on the other hand, as $\alpha$ decreases from 1 towards 0, $\alpha$-loss places increasingly higher weights on low probability outcomes. 
%until at $\alpha=0$, by continuous extension of \eqref{eq:def1}, we have $l^{0}(y,P_{Y})=\infty$, i.e., $\alpha$-loss pays an infinite cost by ignoring the data distribution. 
%This property of $\alpha$-loss is highlighted in Figure \ref{Fig:AlphaLoss_plot}(b) for a (20,0.5)-binomial random variable. 

%%%%%% Finally, we note that there may be infinitely many ways to continuously interpolate between the log-loss and the $0$-$1$ loss. The interpolation introduced by \textit{$\alpha$-loss} seems to provide an information-theoretic interpretation (Proposition \ref{Prop:Liao}) and appears to also be apt in the classification setting \cite{sypherd2019tunable}. 
%Nevertheless, further study of $\alpha$-loss is needed.

%we observe that $\alpha$ quantifies the level of certainty placed on the posterior distribution. 
%Thus, larger $\alpha$ indicate increasing certainty over a smaller set of $Y$ while smaller $\alpha$ distributes the uncertainty over more (and eventually, all) possibles values of $Y$. Indeed, for $\alpha = \infty$, the distribution becomes the  hard-decoding MAP rule.
\subsection{Strict-Local-Quasi-Convexity}
%Guided by the intuition that the $\alpha$-risk in logistic models exhibits approximately quasi-convex behavior for $\alpha \in (1,\infty]$,
We briefly review \textit{Strict-Local-Quasi-Convexity} which was introduced by Hazan \textit{et al.} in \cite{hazan2015beyond}. 
For $\theta_0\in\mathbb{R}^d$ and $r>0$, we let $\mathbb{B}_{d}(\theta_0,r) := \{\theta\in\mathbb{R}^d : \|\theta-\theta_0\| \leq r\}$. 
%Note that for $\theta_{0} = \mathbf{0} \in \mathbb{R}^{d}$, we let $\mathbb{B}_{d}(\theta_0,r) = \mathbb{B}_{d}(\mathbf{0},r) = \mathbb{B}_{d}(r)$ for notational convenience. 
For simplicity, we let $\mathbb{B}_{d}(r)= \mathbb{B}_{d}(\mathbf{0},r)$; also note that all norms are Euclidean.

\begin{definition} \label{def:SLQC}
Let $\theta, \theta_{0} \in \mathbb{R}^{d}$, $\kappa, \epsilon > 0$. We say that $f: \mathbb{R}^{d} \rightarrow \mathbb{R}$ is $(\epsilon, \kappa, \theta_{0})$-Strictly-Locally-Quasi-Convex (SLQC) in $\theta$, if at least one of the following applies:
\begin{enumerate}
\item $f(\theta) - f(\theta_{0}) \leq \epsilon$.
\item $\|\nabla f(\theta)\| > 0$, and for every $\theta' \in \mathbb{B}_{d}(\theta_{0}, \epsilon/\kappa)$ it holds that $\langle \nabla f(\theta), \theta' - \theta \rangle \leq 0$.
\end{enumerate}
\end{definition}

%The normalization of the gradient removes the effect of vanishing and unbounded gradients. 
Intuitively, if $\theta_0$ is fixed, %is a minimizer of an $(\epsilon,\kappa,\theta_0)$-SLQC function $f$, 
then, for every $\theta$, either $f(\theta)$ is $\epsilon$-close to $f(\theta_{0})$ or the constraint cone induced by the set of $\theta'$ about $\theta_{0}$ requires quasi-convex functional descent behavior.
%its gradient descends towards $\theta_0$ or descending with the gradient leads to the `right' direction. 
This relaxed notion of quasi-convexity aligns with a natural adaptation of the Gradient Descent (GD) algorithm, namely, Normalized Gradient Descent (NGD) \cite{hazan2015beyond} as summarized in Algorithm \ref{algo:NGD} below.
\begin{algorithm} 
\caption{Normalized Gradient Descent (NGD)}\label{algo:NGD}
\begin{algorithmic}[1]
%\Procedure{paremeters:}{Scalar $\eta>0$, integer $T>0$}
\State \textbf{Input:} $T$ Iterations, $\theta_{1} \in \mathbb{R}^{d}$, learning rate $\eta > 0$%, minibatch size $b$
%\State \textbf{initialize:} $\textbf{w}^{(1)} = \textbf{0}$
\For {$t = 1, 2, \ldots, T$}
%\State choose $\mathbf{v_{t}}$ at random from a distribution such that $\mathbb{E}[\mathbf{v_{t}}|\mathbf{w^{(t)}}] \in \partial f(\mathbf{w}^{(t)})$
%\State Sample: $\{\psi_{i}\}_{i = 1}^{b} \sim \mathcal{D}^{b}$, and define, 
%$f_{t}(\mathbf{x}) = \dfrac{1}{b} \sum\limits_{i=1}^{b} \psi_{i}(\mathbf{x})$
\State Update: $\theta_{t+1} = \theta_{t} - \eta \dfrac{\nabla f(\theta_{t})}{\|\nabla f(\theta_{t})\|}$
\EndFor
\State \textbf{Return} $\bar{\theta}_{T} = \argmin\limits_{\theta_{1}, \ldots, \theta_{T}} f(\theta_{t})$
%\EndProcedure
\end{algorithmic}
\end{algorithm}
Similar to the convergence guarantees for GD for convex functions, the following result by Hazan \textit{et al.} summarizes such guarantees of NGD for SLQC functions.
%\begin{prop}[{\cite[Lemma 1]{liao2018tunable}}]
\begin{prop}[{\!\cite[Theorem 4.1]{hazan2015beyond}}] \label{prop:NGDiterations}
%Fix $\delta, \epsilon, G, M, \kappa > 0$, and let . Suppose we run SNGD with $T \geq \kappa^{2}\|\theta_{1} - \theta^{*}\|/\epsilon^{2}$ iterations, $\eta = \epsilon/\kappa$, and $b \geq \max\left\{\frac{M^{2} \log{(4T/\delta)}}{2\epsilon^{2}}, b_{0}(\epsilon, \delta,T)\right\}$. If $b \geq b_{0}(\epsilon,\delta,T)$, then \textit{with probability} at least $1-\delta$ and for all $t \in [T]$, the function $f_{t}$ defined in the algorithm is $M$-bounded, and is also $(\epsilon,\kappa,\theta^{*})$-SLQC in $\theta_{t}$. This implies that $f(\bar{\theta}_{T}) - f(\theta^{*}) \leq 3\epsilon$ with probability at least $1-2\delta$.
Fix $\epsilon > 0$, let $f: \mathbb{R}^{d} \rightarrow \mathbb{R}$, and $\theta^{*} = \argmin_{\theta \in \mathbb{R}^{d}} f(\theta)$. If $f$ is $(\epsilon, \kappa, \theta^{*})$-SLQC in every $\theta \in \mathbb{R}^{d}$, then by running Algorithm \ref{algo:NGD} with $\eta = \epsilon/\kappa$ and $T \geq \kappa^{2}\|\theta_{1} - \theta^{*}\|^{2}/\epsilon^{2}$, we have  $f(\bar{\theta}_{T}) - f(\theta^{*}) \leq \epsilon$.
\end{prop}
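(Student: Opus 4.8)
The plan is to run the standard potential-function argument for normalized gradient methods, tracking the squared distance $\|\theta_t - \theta^*\|^2$ to the optimum and showing it contracts by a fixed amount on every iteration for which the target accuracy has not yet been reached. Since Algorithm~\ref{algo:NGD} returns the best iterate $\bar\theta_T = \argmin_{t \leq T} f(\theta_t)$, it suffices to prove that at least one of $\theta_1, \ldots, \theta_T$ satisfies $f(\theta_t) - f(\theta^*) \leq \epsilon$. I would argue by contradiction: assume $f(\theta_t) - f(\theta^*) > \epsilon$ for every $t \in \{1, \ldots, T\}$, so that property (1) in Definition~\ref{def:SLQC} fails at each iterate with $\theta_0 = \theta^*$, forcing property (2) to hold. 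In particular $\|\nabla f(\theta_t)\| > 0$ for all $t$, so the normalized direction $g_t := \nabla f(\theta_t)/\|\nabla f(\theta_t)\|$ is well defined and the update reads $\theta_{t+1} = \theta_t - \eta g_t$.

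The core computation is the expansion
\begin{equation}
\|\theta_{t+1} - \theta^*\|^2 = \|\theta_t - \theta^*\|^2 - 2\eta \langle g_t, \theta_t - \theta^*\rangle + \eta^2,
\end{equation}
after which everything hinges on a lower bound for $\langle g_t, \theta_t - \theta^*\rangle$. This is the step I expect to be the crux, and it is exactly where the SLQC cone condition is used. Applying property (2) at $\theta_t$ to the test point $\theta' = \theta^* + (\epsilon/\kappa) g_t$, which lies in $\mathbb{B}_{d}(\theta^*, \epsilon/\kappa)$ because $\|g_t\| = 1$, gives $\langle \nabla f(\theta_t), \theta' - \theta_t\rangle \leq 0$; dividing by $\|\nabla f(\theta_t)\| > 0$ and rearranging converts this geometric condition into the quantitative estimate $\langle g_t, \theta_t - \theta^*\rangle \geq \epsilon/\kappa$.

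Substituting $\eta = \epsilon/\kappa$ together with this bound collapses the expansion to the per-step contraction
\begin{equation}
\|\theta_{t+1} - \theta^*\|^2 \leq \|\theta_t - \theta^*\|^2 - \frac{\epsilon^2}{\kappa^2}.
\end{equation}
Summing over $t = 1, \ldots, T$ and discarding the nonnegative term $\|\theta_{T+1} - \theta^*\|^2 \geq 0$ yields $T\,\epsilon^2/\kappa^2 \leq \|\theta_1 - \theta^*\|^2$, i.e., $T \leq \kappa^2\|\theta_1 - \theta^*\|^2/\epsilon^2$, which contradicts the hypothesis $T \geq \kappa^2\|\theta_1 - \theta^*\|^2/\epsilon^2$ (the boundary case, forcing $\theta_{T+1} = \theta^*$, is handled by the same inequality and is a routine detail). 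Hence some iterate is $\epsilon$-good and the returned $\bar\theta_T$ satisfies $f(\bar\theta_T) - f(\theta^*) \leq \epsilon$. Beyond the cone-to-inner-product conversion, the only thing to keep careful track of is that the failure of property (1) is precisely what licenses property (2) at every iterate, so the entire trajectory remains in the regime where the contraction is valid.
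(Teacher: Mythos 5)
The paper itself does not prove Proposition~\ref{prop:NGDiterations}; it imports it verbatim from Hazan \emph{et al.}~\cite{hazan2015beyond}. Your argument is essentially the standard proof of that theorem, and its two pillars are correct: the conversion of the SLQC cone condition into the quantitative bound $\langle g_t, \theta_t - \theta^*\rangle \geq \epsilon/\kappa$ via the test point $\theta' = \theta^* + (\epsilon/\kappa)g_t$ (this is exactly the right use of property (2) of Definition~\ref{def:SLQC}), and the per-step contraction $\|\theta_{t+1}-\theta^*\|^2 \leq \|\theta_t-\theta^*\|^2 - \epsilon^2/\kappa^2$ under $\eta = \epsilon/\kappa$.

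The one genuine gap is the endgame of the contradiction. Summing the contraction and discarding $\|\theta_{T+1}-\theta^*\|^2 \geq 0$ yields only the non-strict bound $T \leq \kappa^2\|\theta_1-\theta^*\|^2/\epsilon^2$, which is \emph{compatible} with the hypothesis $T \geq \kappa^2\|\theta_1-\theta^*\|^2/\epsilon^2$ when equality holds, so no contradiction is reached in the boundary case. Your parenthetical fix does not close this: $\theta_{T+1} = \theta^*$ is not by itself contradictory, because the contradiction hypothesis and the returned iterate $\bar{\theta}_T$ concern only $\theta_1,\ldots,\theta_T$. The correct patch is to observe that property (2) can never hold at a point $\theta$ with $\|\theta-\theta^*\| < \epsilon/\kappa$: for small $\delta>0$ the point $\theta' = \theta + \delta\nabla f(\theta)/\|\nabla f(\theta)\|$ lies in $\mathbb{B}_{d}(\theta^*,\epsilon/\kappa)$ and gives $\langle \nabla f(\theta), \theta'-\theta\rangle = \delta\|\nabla f(\theta)\| > 0$. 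Hence property (1) holds on the open ball, and since $f$ is differentiable (hence continuous), $f(\theta)-f(\theta^*) \leq \epsilon$ extends to the closed ball. Under your contradiction hypothesis every iterate therefore satisfies the strict inequality $\|\theta_t - \theta^*\| > \epsilon/\kappa$; chaining the contraction only up to $t = T-1$ and using $\|\theta_T-\theta^*\|^2 > \epsilon^2/\kappa^2$ gives $T\epsilon^2/\kappa^2 < \|\theta_1-\theta^*\|^2$, i.e., $T < \kappa^2\|\theta_1-\theta^*\|^2/\epsilon^2$, which genuinely contradicts the hypothesis on $T$. With this repair your proof is complete and coincides with the cited one.
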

%Note that Hazan \textit{et al.} assume that there exists a $b_{0}$ such that with probability $1-\delta$, the minibatch average of $b \geq b_{0}(\epsilon,\delta,T)$ samples is $(\epsilon,\kappa,\theta^{*})$-SLQC in $\theta$. In the same spirit as Hazan \textit{et al.}, we assume that $b_{0} = \text{poly} (1/\epsilon, \log{(T/\delta)})$.
% \end{algorithm}
%
For an $(\epsilon,\kappa,\theta_0)$-SLQC function, a smaller $\epsilon$ provides better optimality guarantees. Given $\epsilon>0$, smaller $\kappa$ leads to faster optimization as the number of required iterations increases with $\kappa^2$. Finally, by using projections, NGD can be easily adapted to work over convex and closed sets including  $\mathbb{B}_d(r)$. %(see, for example, \cite{hazan2015beyond}). %To this end, analogous to \cite{hazan2015beyond}

\section{Landscape Evolution in the Logistic Model}

%\subsection{The Optimization Landscape of $\alpha$-loss} %via SLQC}
\begin{figure}[h] 
    \centering
    \centerline{\includegraphics[width=1\linewidth]{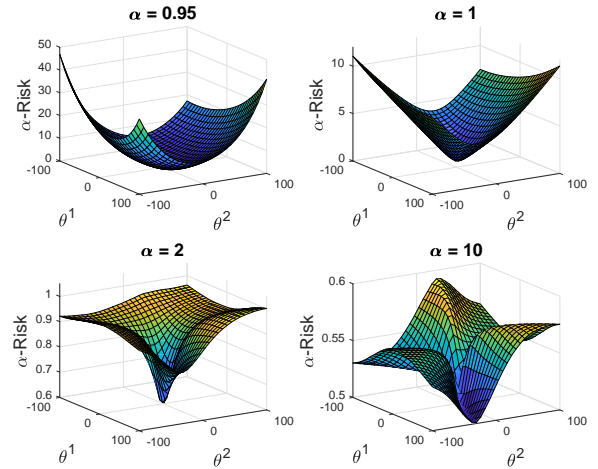}}
    \caption{The landscape of $\alpha$-loss ($R_{\alpha}$ for $\alpha = 0.95, 1, 2, 10$) in the logistic model, where features are normalized and $r=100$, for a 2D-GMM with $\mathbb{P}[Y=-1] = 0.12 = 1 - \mathbb{P}[Y=1]$, $\mu_{X|Y=-1} =[-0.18, 1.49]$, $\mu_{X|Y=1} = [-0.01,0.16]$, $\Sigma_{-1} = [3.20,   -2.02; -2.01,    2.71]$, and $\Sigma_{1} = [4.19,    1.27;  1.27,   0.90]$.}
    %\caption{The landscape of $\alpha$-loss ($R_{\alpha}$ for $\alpha = .95, 1, 2, 10$) in the logistic model for a 2D-GMM with
    %$\mathbb{P}[Y=-1] = \mathbb{P}[Y=1]$, $\mu_{X|Y=-1} =[-0.91, 0.50]$, $\mu_{X|Y=1} = [-0.27,0.20]$, and $\Sigma = [1.38, 0.55; 0.55, 2.18]$.}
    %with $\mathbb{P}[Y=-1] = 0.049 = 1 - \mathbb{P}[Y=1]$, $\mu_{X|Y=-1} =[-1.41, 0.074]$, $\mu_{X|Y=1} = [0.23,0.16]$, $\Sigma_{-1} = [1.23, 0.28; 0.28,1.18]$, and $\Sigma_{1} = [9.70, 3.51; 3.51, 2.17]$.}
    \label{fig:fourlandscapes}
\end{figure}
%
%Motivated by the optimal accuracy of the $0$-$1$ loss 
%
In this section, 
we quantify the optimization complexity of NGD by characterizing the SLQC constants ($\epsilon$ and $\kappa$) of the $\alpha$-risk within a neighborhood of $\alpha=1$ (log-loss) in the logistic model. 
For $\alpha \leq 1$, we find that the $\alpha$-risk is strongly convex under mild distributional assumptions; for $\alpha>1$ we reinterpret the SLQC definition to provide bounds on SLQC constants as $\alpha$ increases. Finally, we provide a result which characterizes a saturation effect of the $\alpha$-risk in the logistic model, \textit{i.e.}, the fact that the $\alpha$-risk observes uniform convergence with respect to $\alpha=\infty$ as $\alpha$ increases.
%we study the evolution of the optimization landscape by deriving explicit SLQC constants for the $\alpha$-risk in the logistic model with respect to $\alpha$. The chief goal of this study is to quantify the optimization complexity of NGD of the $\alpha$-risk within a neighborhood of $\alpha=1$ (log-loss). 
%For $\alpha = 1$, we find that the $\alpha$-risk is strongly convex under  distributional assumptions
%over the entire range $\alpha \in (0,\infty]$ and determine operating characteristics for various regions of $\alpha$.
%we increase $\alpha$, i.e., as the $\alpha$-risk tends towards the probability of error.
% 
%specify explicit SLQC constants for the $\alpha$-risk as $\alpha$ evolves for logistic models. 

Prior to stating our main results, we clarify the setting and provide necessary definitions. 
%
%We now introduce a margin-based $\alpha$-loss. 
%
%In this section we establish some theoretical properties regarding the performance of $\alpha$-loss in a logistic regression setting. Towards this end, 
%
Let $X \in \mathbb{B}_{d}(1) := \{x\in\mathbb{R}^d:\|x\| \leq 1\}$ be the normalized feature, $Y \in \{-1,+1\}$ the label and $S_{n} = \{(X_{i},Y_{i}) : i = 1,\ldots,n\}$ the training dataset where, for each $i\in\{1,\ldots,n\}$, the samples $(X_{i},Y_{i})$ are independently drawn according to an unknown distribution $P_{X,Y}$. 
For a given $r>0$, we consider the logistic model and its associated hypothesis class $\mathcal{G} = \{g_\theta:\theta\in\mathbb{B}_{d}(r)\}$, composed of parameterized soft classifiers $g_{\theta}$ such that
%
%For logistic models, we consider the family of soft classifiers $\mathcal{G} = \{g_\theta:\theta\in\mathbb{B}_{d}(r)\}$ where $r>0$ and
\begin{equation}
g_{\theta}(x) = \sigma(\langle \theta,x \rangle),
\end{equation}
with $\sigma: \mathbb{R} \rightarrow [0,1]$ being the sigmoid function given by
\begin{equation}
\label{eq:DefSigmoid}
    \sigma(z) = \frac{1}{1+e^{-z}}.
\end{equation}
%As presented in \cite{sypherd2019tunable}, in logistic regression $\alpha$-loss is written in an expanded expression which reduces to the commonly used cross-entropy loss for $\alpha = 1$. 
For convenience, we present the following short form of $\alpha$-loss in the logistic model which is equivalent to the expanded expression in \cite{sypherd2019tunable}.
For $\alpha \in (0,\infty]$, $\alpha$-loss is given by
% \begin{align}
% \label{emp}
% \nonumber
% l^{\alpha}(y,g_{\theta}(x)) = \frac{\alpha}{\alpha - 1} \Big[1 &- \frac{1 + y}{2}g_{\theta}(x)^{1 - 1/\alpha} \\ &- \frac{1 - y}{2}(1 - g_{\theta}(x))^{1 - 1/\alpha}\Big].   
% \end{align}
%Notice that for $\alpha = 1$, we recover the cross-entropy loss for logistic regression.
%However, to simplify the exposition, we use the following short form
\begin{equation} \label{eq:alphadefLR}
l^{\alpha}(y,g_{\theta}(x)) = \frac{\alpha}{\alpha-1} \left[1 - g_{\theta}(yx)^{1-1/\alpha} \right].
\end{equation}
For $\alpha = 1$, $l^{1}$ is the logistic loss and we recover logistic regression by optimizing this loss. Further, note that in this setting $y \cdot \langle x, \theta \rangle$ is the margin, and \eqref{eq:alphadefLR} is convex for $\alpha \in (0,1]$ and quasi-convex for $\alpha > 1$ in $y \cdot \langle x, \theta \rangle$; see the extended version \cite{sypherd2019alpha} for proofs of these facts.

For $\theta \in \mathbb{B}_{d}(r)$, we define the $\alpha$-risk $R_\alpha$ as the risk of \eqref{eq:alphadefLR},  
\begin{equation} \label{eq:alphariskdefLR}
R_{\alpha}({\theta}) := \mathbb{E}_{X,Y}[l^{\alpha}(Y,g_{\theta}(X))].
\end{equation}
The $\alpha$-risk \eqref{eq:alphariskdefLR} is plotted for several values of $\alpha$ in a two-dimensional Gaussian Mixture Model (GMM) in Figure~\ref{fig:fourlandscapes}.
Further, observe that, for all $\theta\in\mathbb{B}_d(r)$,
\begin{equation}
R_\infty(\theta) := \mathbb{E}_{X,Y}[l^{\infty}(Y,g_{\theta}(X))] = \mathbb{P}[Y \neq \hat{Y}_\theta],
\end{equation}
where $\hat{Y}_\theta$ is a random variable such that for all $x\in\mathbb{B}_d(1)$, $\mathbb{P}[\hat{Y}_\theta = 1|X=x] = g_\theta(x)$. 

In order to study the landscape of the $\alpha$-risk, we compute the gradient and Hessian of \eqref{eq:alphadefLR}, by employing the following useful properties of the sigmoid 
\begin{align} \label{eq:sigprop1}
%\dfrac{\partial}{\partial \theta} \sigma(\langle \theta,x \rangle) = \sigma(\langle \theta,x \rangle)(1-\sigma(\langle \theta,x \rangle))x
\dfrac{d}{dz} \sigma(z) = \sigma(z)(1-\sigma(z)); && \sigma(-z) = 1-\sigma(z).
\end{align}
% and
% \begin{equation} \label{eq:sigprop2}
% %\sigma(-\langle \theta,x \rangle) = 1-\sigma(\langle \theta,x \rangle).
% \sigma(-z) = 1-\sigma(z).
% \end{equation}
%
Indeed, a straightforward computation shows that
% \begin{align} \label{der}
% \nonumber \frac{\partial}{\partial \theta^{j}} l^{\alpha}(y,g_{\theta}(x)) &= \Big[\frac{1-y}{2}g_\theta(x)(1-g_\theta(x))^{1-1/\alpha} \\ &- \frac{1+y}{2} g_\theta(x)^{1-1/\alpha}(1-g_\theta(x))\Big]x^{j},
% \end{align}
\begin{equation} \label{eq:alphaderLR}
\frac{\partial}{\partial \theta^{j}} l^{\alpha}(y,g_{\theta}(x)) = \left[- y g_\theta(yx)^{1-1/\alpha}(1-g_\theta(yx))\right]x^{j}, 
\end{equation} 
where $\theta^{j}, x^{j}$ denote the $j$-th components of $\theta$ and $x$, respectively. 
Thus, the gradient of $\alpha$-loss in \eqref{eq:alphadefLR} is
\begin{equation} \label{eq:alphagradLR}
\nabla_\theta l^{\alpha}(Y,g_\theta(X)) = F_{1}(\alpha,\theta,X,Y)X,
\end{equation}
where $F_{1}(\alpha,\theta,x,y)$ is the expression within brackets in \eqref{eq:alphaderLR}.
Another straightforward computation yields
\begin{equation} \label{eq:alphader2LR}
\nabla^{2}_{\theta}l^{\alpha}(Y,g_{\theta}(X)) = F_{2}(\alpha,\theta,X,Y)XX^{T},
\end{equation}
where $F_{2}$ is given by
% \begin{align}
% F_2(\alpha,\theta,x,y) &= \dfrac{1 - y}{2}\Big[g_{\theta} (1 - g_{\theta})^{2 - 1/\alpha} - \left(1 - \dfrac{1}{\alpha}\right)g_{\theta}^{2} (1 - g_{\theta})^{1 - 1/\alpha}\Big]\\ &+\dfrac{1+y}{2} \Big[g_{\theta}^{2 - 1/\alpha}(1- g_{\theta}) - \left(1 - \dfrac{1}{\alpha}\right)g_{\theta}^{1 - 1/\alpha}(1 - g_{\theta})^{2} \Big]
% \end{align}
\begin{align} \label{eq:F2}
\nonumber %F_2(\alpha,\theta,x,y) &= g_{\theta}^{1 - 1/\alpha}(yx) \left(g_{\theta}'(yx) - \beta(\alpha) g_{\theta}^{2}(-yx) \right),
F_2(\alpha,\theta,x,y) &= g_{\theta}^{1 - \alpha^{-1}}(yx) \left(g_{\theta}'(yx) - \left(1-\alpha^{-1}\right) g_{\theta}^{2}(-yx) \right).
\end{align}
%
% For all $\theta \in \mathbb{B}_{d}(r)$, we define the $\alpha$-risk $R_\alpha$ as the risk of \eqref{eq:alphadefLR},  
% \begin{equation}
% R_{\alpha}({\theta}) := \mathbb{E}_{X,Y}[l^{\alpha}(Y,g_{\theta}(X))].
% \end{equation}
% Finally, observe that, for all $\theta\in\mathbb{B}_d(r)$,
% \begin{equation}
% R_\infty(\theta) := \mathbb{E}_{X,Y}[l^{\infty}(Y,g_{\theta}(X))] = \mathbb{P}[Y \neq \hat{Y}_\theta],
% \end{equation}
% where $\hat{Y}_\theta$ is a random variable such that for all $x\in\mathbb{B}_d(1)$, $\mathbb{P}[\hat{Y}_\theta = 1|X=x] = g_\theta(x)$.
%In this way, $l^{\infty}$ is equivalent to the 0-1 loss in expectation. Note that Sypherd et al. \cite{sypherd2019alpha} denote $R_{l^{\alpha}}(g_{\theta})$ as the expected risk of \eqref{emp} evaluated at $g_{\theta}$. We drop the $l$ and $g$ for notational convenience since we only consider $\alpha$-loss and the hypothesis class $\mathcal{G}$ is fixed. Thus the risk merely depends on $\alpha$ and the parameter vector $\theta$. 
% We define the empirical $\alpha$-risk $\hat{R}_\alpha$ by
% \begin{equation} \label{eq:empalpharisk}
% \hat{R}_{\alpha}(\theta) = \dfrac{1}{n} \sum\limits_{i=1}^{n} l^{\alpha}(Y_{i},g_{\theta}(X_{i})).
% \end{equation}

We now turn our attention to the case where $\alpha \in (0,1]$; we find that for this regime, $R_{\alpha}$ is strongly convex; see Figure~\ref{fig:fourlandscapes}. %which implies better computational complexity than the regime where $\alpha \in (1,\infty]$. 
%Indeed, as $\alpha \rightarrow 0$, $R_{\alpha}$ becomes more strongly convex; see Figure \ref{fig:fourlandscapes}. 
Prior to stating the result, for two matrices $A,B \in \mathbb{R}^{d\times d}$, we let $\geq$ denote the Loewner (partial) order in the positive semi-definite cone. That is, we write $A \geq B$ when $A-B$ is a positive semi-definite matrix. 
For a matrix $A \in \mathbb{R}^{d\times d}$, let $\lambda_1(A),\ldots,\lambda_d(A)$ be its eigenvalues.
Finally, we recall that a function is $m$-strongly convex if and only if its Hessian has minimum eigenvalue $m \geq 0$ \cite{boyd2004convex}. 
%The proof of the following theorem utilizes standard Hessian convexity arguments and the fact that \eqref{eq:alphader2LR} is non-negative for $\alpha \leq 1$.
\begin{theorem} \label{Thm:SLQClessthan1}
Let $\Sigma := \mathbb{E}[XX^{T}]$. If $\alpha \in (0,1]$, then $R_{\alpha}(\theta)$ 
%is $\displaystyle \left((\alpha^{-1}-1)\sigma^2(-r)+\sigma'(r)\right)\min_{i \in [d]}\lambda_{i}\left(\Sigma\right)$-strongly convex in $\theta$. 
is $\Lambda(\alpha,r) \min_{i \in [d]}\lambda_{i}\left(\Sigma\right)$-strongly convex in $\theta \in \mathbb{B}_{d}(r)$,
where 
\begin{align}
\Lambda(\alpha,r) = \sigma^{1-1/\alpha}(r)\left(\sigma'(r) - \left(1 - \alpha^{-1} \right)\sigma^{2}(-r)\right).
\end{align}
\end{theorem}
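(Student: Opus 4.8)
The plan is to reduce the strong convexity statement to a single uniform pointwise lower bound on the scalar curvature factor $F_2$ from \eqref{eq:F2}. Because $X$ and $\theta$ range over the compact balls $\mathbb{B}_d(1)$ and $\mathbb{B}_d(r)$ and $F_2$ is continuous and bounded there, I would first justify differentiating under the expectation to obtain, from \eqref{eq:alphader2LR}, $\nabla^2 R_\alpha(\theta) = \mathbb{E}[F_2(\alpha,\theta,X,Y)XX^{T}]$. The key claim to establish is that $F_2(\alpha,\theta,x,y) \geq \Lambda(\alpha,r)$ for all $\theta\in\mathbb{B}_d(r)$, $x\in\mathbb{B}_d(1)$, and $y\in\{-1,+1\}$. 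Granting this, $(F_2-\Lambda(\alpha,r))XX^{T}$ is positive semidefinite pointwise, being a nonnegative scalar times a rank-one PSD matrix; since expectation preserves the Loewner order, $\nabla^2 R_\alpha(\theta) \geq \Lambda(\alpha,r)\,\Sigma \geq \Lambda(\alpha,r)\min_i\lambda_i(\Sigma)\,I$ uniformly in $\theta\in\mathbb{B}_d(r)$, where the last step uses $\Sigma \geq \min_i\lambda_i(\Sigma)\,I$ for the symmetric PSD matrix $\Sigma$. The claimed strong convexity then follows from the Hessian characterization recalled just before the theorem.

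To prove the pointwise bound I would rewrite everything through the margin $z := y\langle\theta,x\rangle$, which satisfies $|z|\leq\|\theta\|\,\|x\|\leq r$ by Cauchy--Schwarz, so $z\in[-r,r]$ and $s := g_\theta(yx)=\sigma(z)\in[\sigma(-r),\sigma(r)]$. Using \eqref{eq:sigprop1}, $F_2$ factors as $F_2 = \sigma(z)^{1-1/\alpha}\,G(s)$, where $G(s):=s(1-s)+(\alpha^{-1}-1)(1-s)^2$ is a quadratic in $s$ and $\alpha^{-1}-1\geq 0$ since $\alpha\leq 1$; note that $\Lambda(\alpha,r)$ is precisely this product evaluated at $z=r$. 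I would then bound the two factors separately. The first factor has nonpositive exponent $1-1/\alpha\leq 0$ and $s\leq\sigma(r)$, so $\sigma(z)^{1-1/\alpha}\geq\sigma(r)^{1-1/\alpha}\geq 0$; for the second I would show $G(s)\geq G(\sigma(r))\geq 0$ on $[\sigma(-r),\sigma(r)]$. Since both lower bounds are nonnegative and both factors attain them at the common point $z=r$, their product gives $F_2\geq\sigma(r)^{1-1/\alpha}G(\sigma(r))=\Lambda(\alpha,r)$.

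The main obstacle is the second factor: $G$ is a quadratic in $s$ whose leading coefficient $\alpha^{-1}-2$ changes sign at $\alpha=1/2$ (concave for $\alpha>1/2$, convex for $\alpha<1/2$), so no single monotonicity statement covers all of $(0,1]$. Rather than splitting into cases, I would prove $G(s)\geq G(\sigma(r))$ directly by factoring the difference as $G(s)-G(\sigma(r)) = (s-\sigma(r))\,Q(s)$ with $Q$ affine in $s$. On the interval the factor $s-\sigma(r)$ is nonpositive, so it remains to verify $Q(s)\leq 0$ there, which reduces to the elementary bounds $1 = \sigma(-r)+\sigma(r) \leq s+\sigma(r)\leq 2\sigma(r)\leq 2$ together with $\alpha^{-1}\geq 1$; checking the two sign regimes of $\alpha^{-1}-2$ against these bounds makes $Q(s)\leq 0$ immediate in each. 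Assembling the two factor bounds yields $F_2\geq\Lambda(\alpha,r)$ and hence the theorem. I would also remark that when $\Sigma$ is singular the constant degenerates to $0$, which is consistent with the $m\geq 0$ convention for strong convexity used here.
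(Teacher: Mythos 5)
Your proposal is correct, and its overall skeleton matches the paper's: reduce everything to the pointwise bound $F_2(\alpha,\theta,x,y)\geq\Lambda(\alpha,r)$, pass the bound through the expectation using the Loewner order to get $\nabla^2 R_\alpha(\theta)\geq\Lambda(\alpha,r)\Sigma\geq\Lambda(\alpha,r)\min_i\lambda_i(\Sigma)I$, and invoke the Hessian characterization of strong convexity. Where you genuinely depart from the paper is in the key step: the paper disposes of the bound on $F_2$ with the one-line assertion that ``each component of $F_2$ is positive and monotonic in $\langle\theta,x\rangle$,'' whereas you prove the endpoint bound directly by writing $F_2=\sigma(z)^{1-1/\alpha}G(\sigma(z))$ with $G(s)=s(1-s)+(\alpha^{-1}-1)(1-s)^2$, bounding the first factor by monotonicity of $t\mapsto t^{1-1/\alpha}$, and handling the second via the factorization $G(s)-G(\sigma(r))=(s-\sigma(r))Q(s)$ with $Q$ affine and $Q\leq 0$ on $[\sigma(-r),\sigma(r)]$. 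This is more than pedantry: the paper's monotonicity claim is in fact false for $\alpha$ near $1$ --- at $\alpha=1$ the bracketed factor is $\sigma(z)(1-\sigma(z))$, which is unimodal on $[-r,r]$ with its maximum at $z=0$, not monotonic --- even though the conclusion $F_2\geq\Lambda(\alpha,r)$ remains true (for concave $G$ the minimum over the interval sits at an endpoint, and your sign analysis of $Q$ shows it is the right endpoint $s=\sigma(r)$). So your factorization argument simultaneously fills the ``readily shown'' gap and sidesteps the flaw in the paper's stated justification, while also covering both curvature regimes of $G$ (convex for $\alpha<1/2$, concave for $\alpha>1/2$) without case-splitting on monotonicity; you additionally make explicit the exchange of differentiation and expectation, which the paper assumes silently. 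The one thing to note is that your remark on singular $\Sigma$ matches the paper's convention ($m\geq 0$), so no discrepancy arises there.
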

\begin{proof}
% We recall the following fact regarding positive semi-definite matrices, which we will use in the sequel. 
% \begin{fact}
% Let $a_{i} \geq a > 0$. If $A_{i} \geq 0$, then $\sum_{i} a_{i} A_{i} \geq a \sum_{i} A_{i}$.
% \end{fact}
%
\noindent For each $\alpha \in (0,1]$, it can readily be shown that each component of $F_{2}(\alpha,\theta,x,y)$ is positive and monotonic in $\langle \theta,x \rangle$, which implies that $F_{2}(\alpha,\theta,x,y) \geq \Lambda(\alpha,r) \geq 0$.
% , where
% \begin{align} \nonumber
% \gamma(\alpha,r) = \sigma^{1-1/\alpha}(r)\left(\sigma'(r) - \left(1 - \dfrac{1}{\alpha}\right)\sigma^{2}(-r)\right).
% \end{align}
% Further, note that $\gamma(\alpha,r) \geq (\alpha^{-1}-1)\sigma^2(-r) + \sigma'(r) \geq 0$, for $\alpha \in (0,1]$, which is the statement of the result for simplicity.
%
%
Now, consider $R_{\alpha}(\theta) = \mathbb{E}[l^{\alpha}(Y,g_{\theta}(X))]$. We have
\begin{align}
\nonumber \nabla_{\theta}^{2} R_{\alpha}(\theta) &= \mathbb{E}_{X,Y}[\nabla_{\theta}^{2} l^{\alpha}(Y,g_{\theta}(X))] \\
\nonumber &= \mathbb{E}_{X,Y}[F_{2}(\alpha,\theta,X,Y) XX^{T}] \\ 
\label{eq:thm1_1} &\geq \Lambda(\alpha,r) \mathbb{E}[XX^{T}] \\
\label{eq:thm1_2} &= \Lambda(\alpha,r) \Sigma \geq 0,
\end{align}
where we used an identity of positive semi-definite matrices for \eqref{eq:thm1_1} (see, e.g., \cite[Ch.~7]{horn2012matrix}); for \eqref{eq:thm1_2}, we used the fact that $\Lambda(\alpha,r) \geq 0$ and we recognize that $\Sigma$ is positive semi-definite as it is the 
%positive weighted sum of positive semi-definite matrices. 
autocorrelation of the random vector $X \in \mathbb{B}_{d}(1)$ (see, e.g., \cite[Ch.~7]{papoulis2002probability}). We also note that $\min_{i \in [d]}\lambda_{i}\left(\Sigma\right) \geq 0$ (see, e.g., \cite[Ch.~7]{horn2012matrix}).
Thus, $\nabla_{\theta}^{2} R_{\alpha}(\theta)$ is positive semi-definite for every $\theta \in \mathbb{B}_{d}(r)$. 
%and we note that $\min_{i \in [d]}\lambda_{i}\left(\Sigma\right) \geq 0$ since $\Sigma$ is positive semi-definite \cite{horn2012matrix}. 
Therefore, since $\lambda_{\min}(\nabla^{2} R_{\alpha}(\theta)) \geq \Lambda(\alpha,r) \min_{i \in [d]}\lambda_{i}\left(\Sigma\right) \geq 0$ for every $\theta \in \mathbb{B}_{d}(r)$ \cite[Corollary~4.3.12]{horn2012matrix}, we have that $R_{\alpha}$ is $\Lambda(\alpha,r) \min_{i \in [d]}\lambda_{i}\left(\Sigma\right)$-strongly convex for $\alpha \in (0,1]$.
%Finally, we recall that a function is $m$-strongly convex if and only if its Hessian has minimum eigenvalue $m \geq 0$ \cite{boyd2004convex}. 
% \begin{fact}
% If $f: \mathbb{B}_{d}(r) \rightarrow \mathbb{R}_{+}$ is twice continuously differentiable, then $f$ is strongly convex with parameter $m \geq 0$ if and only if $\min_{i \in [d]} \lambda_{i}\left(\nabla_{\theta}^{2} f(\theta) \right) \geq m$ for all $\theta \in \mathbb{B}_{d}(r)$.
% \end{fact}
%
%Thus, every eigenvalue of $\mathbb{E}[XX^{T}]$ is positive and we obtain the desired result.
%
\end{proof}
Observe that for $r > 0$, $\Lambda(\alpha,r)$ is monotonically decreasing in $\alpha$. 
Therefore, $R_{\alpha}$ becomes more strongly convex as $\alpha$ approaches zero.
%Further, note that for arbitrary distributions, $R_{\alpha}$ is quasi-convex. See our previous work...
%
%
It can be shown that $R_{\alpha}$ 
%The proof, which can be found in Appendix~\ref{appen:SLQC1}, relies on three main facts: (i) $R_{\alpha}$ is quasi-convex for $\alpha \leq 1$, (ii) $R_{\alpha}$ 
is $C_{r,\alpha}$-Lipschitz in $\theta$ where $C_{r,\alpha} := \sigma(r)(1-\sigma(r))^{1-1/\alpha}$. Thus, in conjunction with Theorem~\ref{Thm:SLQClessthan1} and a result by Hazan \textit{et al.} in \cite{hazan2015beyond} (after Definition 3) which holds by assuming $\Sigma>0$, we provide the following corollary which explicitly characterizes SLQC constants of $R_{\alpha}$ for $\alpha \in (0,1]$.
\begin{corollary} \label{Cor:SLQClessthan1}
If $0 < \alpha \leq 1$, $\Sigma > 0$, and $\theta_0 \in \mathbb{B}_{d}(r)$, then, for every $\epsilon>0$, the $\alpha$-risk $R_{\alpha}$ is $(\epsilon, C_{r,\alpha},\theta_0)$-SLQC in $\theta\in\mathbb{B}_{d}(r)$ where $C_{r,\alpha} = \sigma(r)(1-\sigma(r))^{1-1/\alpha}$.
\end{corollary}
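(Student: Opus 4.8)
The plan is to obtain the Corollary as an immediate consequence of the strong convexity established in Theorem~\ref{Thm:SLQClessthan1}, combined with a uniform Lipschitz bound on $R_\alpha$, by invoking the cited observation of Hazan \textit{et al.} that a strongly convex, $G$-Lipschitz function is $(\epsilon, G, \theta_0)$-SLQC for every $\epsilon > 0$. The hypothesis $\Sigma > 0$ is precisely what promotes the conclusion of Theorem~\ref{Thm:SLQClessthan1} from plain convexity to strong convexity: it forces $\min_{i\in[d]}\lambda_i(\Sigma) > 0$, so the strong-convexity modulus $\Lambda(\alpha,r)\min_i\lambda_i(\Sigma)$ is strictly positive.

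The one genuine computation is verifying the Lipschitz constant $C_{r,\alpha}$. Starting from the gradient expression \eqref{eq:alphagradLR}, namely $\nabla_\theta l^\alpha(Y,g_\theta(X)) = F_1(\alpha,\theta,X,Y)\,X$, I would use $\|X\| \leq 1$ and $|Y|=1$ to bound $\|\nabla_\theta l^\alpha\| \leq |F_1| = g_\theta(yx)^{1-1/\alpha}\bigl(1 - g_\theta(yx)\bigr)$. Writing $p = g_\theta(yx) = \sigma(y\langle\theta,x\rangle)$, the margin $\langle\theta,x\rangle$ ranges over $[-r,r]$ for $\theta\in\mathbb{B}_d(r)$ and $\|x\|\leq 1$, so $p \in [\sigma(-r),\sigma(r)]$. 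For $\alpha \leq 1$ we have $1 - 1/\alpha \leq 0$, whence $p \mapsto p^{1-1/\alpha}(1-p)$ is monotone decreasing on this interval and attains its supremum at $p = \sigma(-r) = 1 - \sigma(r)$, giving exactly $C_{r,\alpha} = \sigma(r)(1-\sigma(r))^{1-1/\alpha}$. Passing to the expectation and applying the triangle inequality then yields $\|\nabla R_\alpha(\theta)\| \leq C_{r,\alpha}$ uniformly on $\mathbb{B}_d(r)$.

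With convexity and the Lipschitz bound secured, the SLQC property follows from a short first-order argument matching Definition~\ref{def:SLQC}. Fixing $\theta$, if $R_\alpha(\theta) - R_\alpha(\theta_0) \leq \epsilon$ the first clause holds; otherwise $R_\alpha(\theta) - R_\alpha(\theta_0) > \epsilon$, and convexity gives $\langle \nabla R_\alpha(\theta), \theta_0 - \theta\rangle \leq R_\alpha(\theta_0) - R_\alpha(\theta) < -\epsilon$, while strong convexity guarantees $\|\nabla R_\alpha(\theta)\| > 0$ since such a $\theta$ cannot be the unique minimizer. For any $\theta'\in\mathbb{B}_d(\theta_0,\epsilon/C_{r,\alpha})$, Cauchy--Schwarz and the Lipschitz bound give $\langle \nabla R_\alpha(\theta), \theta'-\theta_0\rangle \leq C_{r,\alpha}\|\theta'-\theta_0\| \leq \epsilon$; adding the two inner products shows $\langle \nabla R_\alpha(\theta), \theta'-\theta\rangle < 0$, which is the second clause.

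I expect the principal difficulty to be bookkeeping rather than conceptual: correctly establishing the monotonicity of the scalar factor $F_1$ over the margin interval $[-r,r]$ so that its maximum lands on the boundary and reproduces $C_{r,\alpha}$ exactly. The point requiring the most care is the sign of $1 - 1/\alpha$ when $\alpha \leq 1$, which makes $p^{1-1/\alpha}$ singular as $p \to 0$; this is harmless here because the normalizations $\|X\|\leq 1$ and $\theta\in\mathbb{B}_d(r)$ keep $p$ bounded away from both $0$ and $1$.
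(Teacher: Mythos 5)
Your proposal is correct and follows essentially the same route as the paper, which likewise obtains the corollary by combining the strong convexity of Theorem~\ref{Thm:SLQClessthan1} (with $\Sigma>0$), the $C_{r,\alpha}$-Lipschitz bound on $R_\alpha$, and the observation of Hazan \emph{et al.} that a Lipschitz (strongly) convex function is SLQC for every $\epsilon>0$. The only difference is that you supply the details the paper leaves implicit --- the monotonicity argument pinning the gradient bound at $p=\sigma(-r)$ and the first-order proof of the SLQC clauses --- and both computations check out.
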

%The proof of Corollary \ref{Cor:SLQClessthan1} follows directly from the fact that if a function $f$ is $G$-Lipschtiz and strictly-quasi-convex, then for all $\epsilon > 0$, $f$ is $(\epsilon,G,\theta_{0})$-SLQC in $\theta$ \cite{hazan2015beyond} and strong convexity supersedes strict-quasi-convexity.
%\textcolor{blue}{We need to move up Lipschitz inequalities?}
%The proof of Corollary~\ref{Cor:SLQClessthan1} can be found in \cite{sypherd2019alpha}.
%
%can be found in \cite{sypherd2019alpha}; strong convexity of $R_{\alpha}$ eliminates the unimodality assumption and supersedes the previously shown quasi-convexity in that work.
%Finally, note that for fixed $r > 0$, $\lim_{\alpha \rightarrow 0} C_{r,\alpha} = \infty$. 
%As $\alpha$ approaches zero, while $R_{\alpha}$ becomes more strongly convex, note that the computational complexity of NGD also increases. Let $\epsilon > 0$ be fixed. 
As $\alpha$ tends to zero, $C_{r,\alpha}$ tends to infinity which implies that the learning rate of NGD, $\eta_{\alpha} = \epsilon/\kappa_{\alpha} = \epsilon/C_{r,\alpha}$ also tends to zero. Thus, by Proposition~\ref{prop:NGDiterations}, the number of iterations of NGD, $T_{\alpha}$, tends to infinity as $\alpha$ tends to zero. Therefore, for $\alpha \in (0,1]$, there is a trade-off in the desired strong-convexity of $R_{\alpha}$ and the computational complexity of NGD.

%the step-size $\eta$ is fixed.
%due to the fact that $R_{\alpha}$ rapidly increases as $\alpha \rightarrow 0$. %(see Figure \ref{fig:fourlandscapes}).

%With Theorem \ref{thm:SLQCbigone} and Corollary \ref{Cor:SLQClessthan1}, we depict the degradation of the SLQC parameters as $\alpha$ increases in Figure \ref{fig:2DLandscape_ball}.
%Finally, note that for fixed $r > 0$, $\lim_{\alpha \rightarrow 0} C_{r,\alpha} = \infty$. Thus, for smaller $\alpha$, $R_{\alpha}$ is more strongly convex, but, the computational complexity also increases due to the significant growth in $R_{\alpha}$ (see Figure \ref{fig:fourlandscapes}).

Next, we study the evolution of SLQC parameters of $R_{\alpha}$ in a neighborhood of $\alpha=1$ as we increase $\alpha$.
Since $R_{\alpha}$ tends more towards the probability of error (expectation of $0$-$1$ loss) as $\alpha$ approaches infinity, we find that SLQC constants  deteriorate and the computational complexity of NGD increases as we increase $\alpha$.
Our next main result leverages the following novel lemma, which is a structural result for general differentiable functions that provides an alternative formulation of the second requirement of SLQC functions in Definition~\ref{def:SLQC}; proof details and illustrations can be found in the extended version \cite{sypherd2019alpha}. 
%This result provides a useful equivalence between the range of acceptable gradients and the gradient of the point in consideration and is illustrated geometrically in Figure \ref{fig:acceptablegradients}. 
%The geometry studied in Lemma \ref{anglelemma} is illustrated 
%One observes that the angle of possible gradients at $\theta$ increases as $\theta$ approaches $\theta_{0}$. 
%The generality of the result suggests broader applicability; proof details in \cite{sypherd2019alpha}.  %Intuitively, increasing 
\begin{lemma} \label{anglelemma}
Assume that $f:\mathbb{R}^d\to\mathbb{R}$ is differentiable, $\theta_{0} \in \mathbb{R}^d$ and $\rho>0$. If $\theta\in\mathbb{R}^d$ is such that $\|\theta-\theta_0\|>\rho$, then the following are equivalent:
\begin{itemize}
    \item[1.] $\langle -\nabla f(\theta), \theta' - \theta \rangle \geq 0$ for all $\theta' \in \mathbb{B}_{d}\left(\theta_{0},\rho\right)$,
    \item[2.] $\langle -\nabla f(\theta), \theta_{0} - \theta \rangle \geq \rho \|\nabla f(\theta)\|$.
\end{itemize}
\end{lemma}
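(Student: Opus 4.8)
The plan is to recognize both conditions as statements about the minimum of a single linear functional over the ball $\mathbb{B}_{d}(\theta_0,\rho)$, and to evaluate that minimum in closed form via Cauchy--Schwarz. Write $g := -\nabla f(\theta)$. Condition 1 asserts that $\langle g, \theta'-\theta\rangle \geq 0$ holds for \emph{every} $\theta'\in\mathbb{B}_{d}(\theta_0,\rho)$, which is precisely the statement that
\[
\min_{\theta'\in\mathbb{B}_{d}(\theta_0,\rho)} \langle g, \theta'-\theta\rangle \geq 0.
\]
Thus the whole lemma reduces to computing this minimum and comparing it with condition 2. Since the computation yields an ``if and only if,'' both implications are handled simultaneously.

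First I would dispose of the degenerate case $\nabla f(\theta)=0$: then $g=0$, both inner products vanish, and conditions 1 and 2 each read $0\geq 0$, so they hold together. (I note in passing that the hypothesis $\|\theta-\theta_0\|>\rho$ is not actually needed for the equivalence; it merely fixes the regime of interest inherited from Definition~\ref{def:SLQC}.) For the main case $g\neq 0$, I would parametrize the ball as $\theta'=\theta_0+\rho u$ with $\|u\|\leq 1$, so that
\[
\langle g,\theta'-\theta\rangle = \langle g,\theta_0-\theta\rangle + \rho\langle g,u\rangle.
\]
Minimizing the right-hand side over $\|u\|\leq 1$ is immediate from Cauchy--Schwarz: $\langle g,u\rangle \geq -\|g\|$, with equality at $u=-g/\|g\|$. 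Hence the minimum over the ball equals $\langle g,\theta_0-\theta\rangle-\rho\|g\|$.

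Therefore condition 1 is equivalent to $\langle g,\theta_0-\theta\rangle - \rho\|g\| \geq 0$, that is, $\langle -\nabla f(\theta), \theta_0-\theta\rangle \geq \rho\|\nabla f(\theta)\|$, which is exactly condition 2. The argument is essentially a one-line support-function computation, so I do not anticipate a genuine obstacle. The only points requiring minor care are the tightness of the Cauchy--Schwarz bound --- confirming that the minimizing point $\theta_0-\rho\,g/\|g\|$ indeed lies in the ball, which holds since it is at distance exactly $\rho$ from $\theta_0$ --- and the separate treatment of the vanishing-gradient case, so that the claimed equivalence is literally correct rather than merely generic.
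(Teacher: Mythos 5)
Your proof is correct, and it is worth noting at the outset that this conference paper does not actually print a proof of Lemma~\ref{anglelemma}: it defers ``proof details and illustrations'' to the extended version \cite{sypherd2019alpha}, whose argument (as the label \emph{angle} lemma, the paper's stated reliance on ``geometric techniques,'' and the promised illustrations all indicate) is phrased geometrically, in terms of the cone of directions from $\theta$ to points of $\mathbb{B}_{d}(\theta_0,\rho)$ and the angle this cone's axis makes with $-\nabla f(\theta)$ --- a picture in which the hypothesis $\|\theta-\theta_0\|>\rho$ plays the natural role of placing $\theta$ outside the ball so that the cone is proper. Your route is more algebraic and, I would argue, cleaner: you recognize condition 1 as the statement $\min_{\theta'\in\mathbb{B}_{d}(\theta_0,\rho)}\langle -\nabla f(\theta),\theta'-\theta\rangle\geq 0$, and you evaluate that minimum in closed form as $\langle -\nabla f(\theta),\theta_0-\theta\rangle-\rho\|\nabla f(\theta)\|$ by writing $\theta'=\theta_0+\rho u$ and applying Cauchy--Schwarz with the explicit minimizer $u=-\nabla f(\theta)^{\perp\text{-free}}$ direction $-g/\|g\|$ (a standard support-function computation). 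This single identity delivers both implications simultaneously, handles the degenerate case $\nabla f(\theta)=0$ transparently, and justifies your correct observation that the hypothesis $\|\theta-\theta_0\|>\rho$ is never used: the equivalence holds for every $\theta$ (when $\theta$ lies strictly inside the ball and the gradient is nonzero, both conditions simply fail together), so you in fact prove a slightly stronger statement than the one claimed. What the geometric approach buys is intuition --- it explains \emph{why} condition 2 is the right reformulation, namely that the extremal $\theta'$ is the point of tangency on the sphere --- whereas your approach buys brevity, full rigor in the edge cases, and generality; the one small blemish in your write-up is the stray phrase suggesting the minimizing direction needs any decomposition, since $u=-g/\|g\|$ suffices exactly as you state.
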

Intuitively, Lemma~\ref{anglelemma} reformulates the SLQC requirement that the gradient points in the `right' direction into an expression which is reminiscent of a Cauchy-Schwarz inequality. 
%\textcolor{blue}{We should add more discussion here following new version of the figure.}
%As illustrated in Figure \ref{fig:acceptablegradients}, Lemma \ref{anglelemma} also captures the fact that the angle of possible gradients at $\theta$ decreases as $\theta$ approaches $\theta_{0}$. 
%Intuitively, increasing $\alpha$ is equivalent to $\theta$ being farther away from the $\epsilon/\kappa$ ball.
% Intuitively, we find that increasing $\alpha$ is equivalent to moving $\theta$ farther away from the $\epsilon/\kappa$ ball (chosen for some initial $\alpha_0$) as depicted in Figure \ref{fig:acceptablegradients}. 
%In the next section, we employ the above preliminaries to study the SLQC behavior of the $\alpha$-risk in logistic regression as a function of $\alpha$.

%With the necessary preliminaries in hand, we are now in a position to provide our main results which capture the evolution of the $\alpha$-loss optimization landscape as $\alpha$ changes. 
%Our first result studies the regime where $\alpha \in [1,\infty]$; our second studies $\alpha \in (0,1]$.
%Since $\alpha$-loss as $\alpha \rightarrow \infty$ tends more towards a soft 0-1 loss, we expect the landscape to become more non-convex, and hence the optimization complexity to increase (see Figure \ref{fig:fourlandscapes}).
We now present two Lipschitz inequalities which will be useful in the sequel.
In the extended version \cite{sypherd2019alpha}, Sypherd \textit{et al.} show that for $\alpha \in [1,\infty]$, $R_{\alpha}$ is $L_{r}$-Lipschitz in $\alpha$ where
\begin{equation}
\label{eq:DefLr}
\quad L_{r} := \dfrac{(r+\log{2})^{2}}{2}.
\end{equation}
% and $\nabla R_{\alpha}$ is $K_{r}$-Lipschitz in $\alpha$ where
% \begin{equation}
% \label{eq:DefKr}
%      K_{r} := (r + \log{2})\dfrac{\left(\sigma(r)\right)^{1-1/(2\sigma(r)-1)}}{(2\sigma(r)-1)^{2}}.
% \end{equation}
It can similarly be shown that for $\alpha \in [1,\infty]$, $\nabla R_{\alpha}$ is $J_{r}$-Lipschitz in $\alpha^{-1} \in [0,1]$ where 
\begin{equation}
\label{eq:DefJr}
     J_{r} := (r+\log{2})\sigma(r).
\end{equation}
% $J_{r} > 0$, i.e.,
%     for $\alpha, \alpha' \in [1,\infty]$ and for any $\theta \in \mathbb{B}_{d}(r)$,
%     \begin{equation} 
%         \|\nabla R_{\alpha}(\theta) - \nabla R_{\alpha'}(\theta)\| \leq J_{r} \left|\dfrac{1}{\alpha} - \dfrac{1}{\alpha'} \right|.
%     \end{equation}
%
% \begin{figure}[h]
%     \centering
%     %\includegraphics[scale=0.35]{9.png}
%     %\includegraphics[scale=0.35]{1.png}
%     \centerline{\includegraphics[width=1\linewidth]{landscape.eps}}
%     %\includegraphics[scale=.3]{wellevolution.eps}
%     \caption{The landscape of the $\alpha$-risk, $R_{\alpha}$ ($\alpha = 1, 1.1$) in the logistic model for a 2D-GMM with $\mathbb{P}[Y=1]=\mathbb{P}[Y=-1]$, $\mu_{X|Y=-1} = [.5,.5]$, $\mu_{X|Y=1} = [1,1]$, $\Sigma = [3, .75; .75, 3]$. For $\alpha=1$, the red region depicts $\epsilon_{0}/\kappa_{0}$ which is calculated using Corollary \ref{Cor:SLQClessthan1} about $\theta_{0}$, where $\theta_{0}$ is set to be the global minimum of $R_{1}$; for illustrative purposes, we set $\epsilon_{0} = .25$ and it is depicted by the yellow plane for $\alpha = 1$. For $\alpha=1.1$, the red region depicts $\epsilon/\kappa$ about $\theta_{0}$ and $\epsilon$ is also depicted by the yellow plane; both quantities are calculated using... \textcolor{blue}{This figure will change a little bit}.} \label{fig:2DLandscape_ball}
% \end{figure}
%
Finally, for ease of notation, let 
\begin{equation}
I_{\alpha_{0},\epsilon_{0},r}(\theta_{0}) = \inf_{\theta \in \mathbb{B}_{d}(r) \atop R_{\alpha_{0}}(\theta) - R_{\alpha_{0}}(\theta_{0}) > \epsilon_{0}} \|\nabla R_{\alpha_{0}}(\theta)\|.
\end{equation}
%Observe that $I_{\alpha_{0},\epsilon_{0},r}(\theta_{0})$ quantifies the strength of the gradients relative to the initial parameters.
%, which intuitively represents a form of certainty in the landscape evolution.
Using Lemma~\ref{anglelemma} and the Lipschitz relations 
\eqref{eq:DefLr} and \eqref{eq:DefJr}, we provide the following result which gives precise bounds on the degradation of SLQC constants for any initial $\alpha_{0} \in [1,\infty]$. %but is most practically useful for $\alpha_{0}$ in a neighborhood of $1$. %as we can combine the result with Theorem \ref{Thm:SLQClessthan1}.
\begin{theorem}\label{thm:SLQCresult1}
Let $\alpha_{0} \in [1,\infty]$, $\epsilon_{0}, \kappa_{0}>0$, and $\theta_{0} \in \mathbb{B}_{d}(r)$.  
%$I_{\alpha_{0},\epsilon_{0}}(\theta_{0}) > 0$ (\textcolor{blue}{We need a slick way of saying that if $I_{\alpha_{0},\epsilon_{0}}(\theta_{0}) \leq 0$ for any $\epsilon_{0}>0$, then increasing the value of $\alpha$ is not possible, ie, saturation for that particular $\theta_{0}$.}).
%
%Let $B_{\alpha_{0},\epsilon_{0}}(\theta_{0}) = \{\theta \in \mathbb{B}_{d}(r) | R_{\alpha_{0}}(\theta) - R_{\alpha_{0}}(\theta_{0}) > \epsilon_{0}\}$.
If $R_{{\alpha_{0}}}$ is $(\epsilon_{0},\kappa_{0},\theta_{0})$-SLQC in $\theta \in \mathbb{B}_{d}(r)$, 
%$I_{\alpha_{0},\epsilon_{0},r}(\theta_{0}) > 0$,
%$\|\nabla R_{\alpha_{0}}(\theta)\|>\frac{J_{r}}{\alpha_{0}}$ for all $\theta \in B_{\alpha_{0},\epsilon_{0}}(\theta_{0})$, 
and
%
%
%$$|\alpha-\alpha'| \leq \dfrac{\gamma M(\alpha',\theta)}{K_{r} (\epsilon/\kappa + r)} {\color{blue} \leq \frac{\gamma \|\nabla R_{\alpha'}(\theta)\|}{(\gamma_0 + r)K_r}},$$ 
%
% \begin{equation}
% |\alpha-\alpha_{0}| < \dfrac{I_{\alpha_{0},\epsilon_{0},r}(\theta_{0})}{(1 + \frac{2r\kappa_{0}}{\epsilon_{0}})K_r}, %\frac{I_{\alpha_{0},\epsilon_{0}}(\theta_{0})}{K_r},
% \end{equation}
%
\begin{equation} \label{eq:radchange}
0 \leq \alpha-\alpha_{0} < \dfrac{\alpha_{0}^{2} I_{\alpha_{0},\epsilon_{0},r}(\theta_{0})}{2J_{r}\left(1 + r \frac{\kappa_{0}}{\epsilon_{0}}\right)}, %\frac{I_{\alpha_{0},\epsilon_{0}}(\theta_{0})}{K_r},
\end{equation}
then $R_{{\alpha}}$ is $(\epsilon,\kappa,\theta_{0})$-SLQC in $\theta \in \mathbb{B}_{d}(r)$ with 
\begin{equation}
\epsilon = \epsilon_{0} + 2 L_{r}(\alpha-\alpha_{0}),
\end{equation}
and 
% \begin{equation}
% \dfrac{\epsilon''}{\kappa''} = \frac{\epsilon'}{\kappa'} \left(1 - \frac{(\epsilon'/\kappa'+r)K_r}{\epsilon'/\kappa' \left(\|\nabla R_{\alpha'}(\theta)\|- \frac{J_{r}}{\alpha'}\right)} |\alpha'' - \alpha'|\right).
% \end{equation}
\begin{align} \label{eq:thmrho}
%\dfrac{\epsilon}{\kappa} = \frac{\epsilon_{0}}{\kappa_{0}} \left(1 - \frac{\left(1+2r\frac{\kappa_{0}}{\epsilon_{0}}\right)J_r}{I_{\alpha_{0},\epsilon_{0},r}(\theta_{0})(\alpha_{0}^{-1} - \alpha^{-1})^{-1} - J_{r}}\right). \\
\dfrac{\epsilon}{\kappa} = \frac{\epsilon_{0}}{\kappa_{0}} \left(1  - \frac{\left(1 +2r\frac{\kappa_{0}}{\epsilon_{0}}\right)J_r(\alpha-\alpha_{0})}{\alpha\alpha_{0}I_{\alpha_{0},\epsilon_{0},r}(\theta_{0}) - J_{r}(\alpha-\alpha_{0})}\right).
\end{align}
%Conversely, if $I_{\alpha_{0},\epsilon_{0}}(\theta_{0}) \leq 0$, then 
\end{theorem}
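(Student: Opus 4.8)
The plan is to verify Definition~\ref{def:SLQC} for $R_\alpha$ at the stated parameters by transporting both SLQC requirements from $\alpha_0$ to $\alpha$ using the two Lipschitz estimates \eqref{eq:DefLr} and \eqref{eq:DefJr} together with the reformulation in Lemma~\ref{anglelemma}. Write $\rho_0 := \epsilon_0/\kappa_0$ and $\rho := \epsilon/\kappa$, set $I := I_{\alpha_0,\epsilon_0,r}(\theta_0)$, and let $\delta := J_r(\alpha-\alpha_0)/(\alpha\alpha_0)$; since $|\alpha^{-1}-\alpha_0^{-1}| = (\alpha-\alpha_0)/(\alpha\alpha_0)$, the estimate \eqref{eq:DefJr} gives $\|\nabla R_\alpha(\theta)-\nabla R_{\alpha_0}(\theta)\| \le \delta$ for every $\theta \in \mathbb{B}_d(r)$. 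Now fix an arbitrary $\theta \in \mathbb{B}_d(r)$. If $R_\alpha(\theta)-R_\alpha(\theta_0) \le \epsilon$, the first alternative of Definition~\ref{def:SLQC} holds and there is nothing to prove, so assume $R_\alpha(\theta)-R_\alpha(\theta_0) > \epsilon$ and aim to establish the second alternative with radius $\rho$.

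First I would establish a level-set containment. Applying \eqref{eq:DefLr} at $\theta$ and at $\theta_0$ and recalling $\epsilon = \epsilon_0 + 2L_r(\alpha-\alpha_0)$, the assumption $R_\alpha(\theta)-R_\alpha(\theta_0) > \epsilon$ forces $R_{\alpha_0}(\theta)-R_{\alpha_0}(\theta_0) > \epsilon_0$. Thus $\theta$ lies in the region on which $R_{\alpha_0}$ violates its first SLQC alternative; since $R_{\alpha_0}$ is $(\epsilon_0,\kappa_0,\theta_0)$-SLQC, its second alternative must hold at $\theta$, and by the definition of the infimum $I$ we also have $\|\nabla R_{\alpha_0}(\theta)\| \ge I$. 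Because the second alternative holds with nonzero gradient, $\theta$ cannot be an interior point of $\mathbb{B}_d(\theta_0,\rho_0)$, so $\|\theta-\theta_0\| \ge \rho_0 > \rho$, where the strict inequality is read off from \eqref{eq:thmrho}, which exhibits $\rho$ as $\rho_0$ times a factor in $(0,1)$. Applying Lemma~\ref{anglelemma} to $R_{\alpha_0}$ with radius $\rho_0$ then converts the second SLQC alternative into the inner-product form $\langle -\nabla R_{\alpha_0}(\theta),\, \theta_0-\theta\rangle \ge \rho_0\|\nabla R_{\alpha_0}(\theta)\|$.

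Next I would transfer this Cauchy--Schwarz-type inequality to $R_\alpha$. Writing $g := \nabla R_\alpha(\theta)$, $g_0 := \nabla R_{\alpha_0}(\theta)$, $w := \theta_0-\theta$, and using $\|g-g_0\| \le \delta$ and $\|w\| \le 2r$ (both $\theta,\theta_0 \in \mathbb{B}_d(r)$), I obtain $\langle -g, w\rangle \ge \langle -g_0, w\rangle - \delta\|w\| \ge \rho_0\|g_0\| - 2r\delta$. Inserting $\|g_0\| \ge \|g\|-\delta$ yields $\langle -g, w\rangle \ge \rho_0\|g\| - \delta(\rho_0+2r)$, so the target inequality $\langle -g, w\rangle \ge \rho\|g\|$ holds as soon as $(\rho_0-\rho)\|g\| \ge \delta(\rho_0+2r)$. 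Since the left side is increasing in $\|g\|$ and $\|g\| \ge \|g_0\|-\delta \ge I-\delta$, it suffices to enforce this at the extreme value $\|g\| = I-\delta$; solving $(\rho_0-\rho)(I-\delta) = \delta(\rho_0+2r)$ for $\rho$ and substituting $\delta$ reproduces exactly the closed form \eqref{eq:thmrho}. Finally, because $\|g\| \ge I-\delta > 0$ and $\|\theta-\theta_0\| > \rho$, Lemma~\ref{anglelemma} applied in reverse to $R_\alpha$ with radius $\rho$ turns $\langle -g, w\rangle \ge \rho\|g\|$ back into the second SLQC alternative for $R_\alpha$ at $\theta$ with radius $\epsilon/\kappa = \rho$, completing the verification.

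It remains to confirm that hypothesis \eqref{eq:radchange} makes every step legitimate. Rewriting \eqref{eq:radchange} with $1 + r\kappa_0/\epsilon_0 = (\rho_0+r)/\rho_0$ and using $\alpha \ge \alpha_0$, it implies $\delta < \rho_0 I / \big(2(\rho_0+r)\big)$; this single bound simultaneously makes the numerator $\rho_0 I - 2\delta(\rho_0+r)$ of \eqref{eq:thmrho} positive (so $\rho>0$ is an admissible radius) and gives $\delta < I$ (so $I-\delta>0$, keeping $\|g\|>0$ and the division in the previous step valid). I expect the main obstacle to be the gradient-transfer step: the chain of inequalities must be ordered so that $\|\nabla R_{\alpha_0}(\theta)\|$ is first eliminated in favor of $\|\nabla R_\alpha(\theta)\|$ and only then replaced by its infimal value $I-\delta$, since it is precisely this ordering and worst-case substitution that yield the stated constants in \eqref{eq:thmrho} rather than a superficially different bound; verifying the monotonicity that justifies evaluating at $\|g\| = I-\delta$ is the delicate point.
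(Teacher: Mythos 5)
Your proposal is correct and takes essentially the same route as the paper's proof: both handle the near-optimal case via the $L_r$ bound (yours by contraposition, the paper's by splitting directly on the level of $R_{\alpha_0}$), and both transfer the Lemma~\ref{anglelemma} inner-product form from $R_{\alpha_0}$ to $R_{\alpha}$ using the $J_r$-Lipschitz estimate, Cauchy--Schwarz, the reverse triangle inequality, and the worst-case substitution $\|\nabla R_{\alpha}(\theta)\| \geq I - \delta$, arriving at the identical expression for $\epsilon/\kappa$. If anything, you are slightly more careful than the paper in verifying the hypothesis of Lemma~\ref{anglelemma} (noting $\|\theta-\theta_0\| \geq \rho_0 > \rho$ via the interior-point argument) and in checking that \eqref{eq:radchange} yields both $\rho>0$ and $I-\delta>0$.
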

% Let $\epsilon,\kappa>0$, $\theta_{0} \in \mathbb{B}_{d}(r)$. If $R_{{\alpha}}$ is $(\epsilon,\kappa,\theta_{0})$-SLQC in $\theta$ and $|\alpha-\alpha'| \leq 1/({2K_r})$, then $R_{{\alpha'}}$ is $(\epsilon',\kappa',\theta_{0})$-SLQC in $\theta$ with $\epsilon' = \epsilon + 2|\alpha-\alpha'| L_{r}$ and ${\epsilon'}/{\kappa'} = (1-2K_{r}|\alpha - \alpha'|){\epsilon}/{\kappa}$.
\begin{proof}
For ease of notation let $\rho_{0} = \frac{\epsilon_{0}}{\kappa_{0}}$ and $\rho = \frac{\epsilon}{\kappa}$. Let $\theta \in \mathbb{B}_{d}(r)$ be arbitrary and consider the following cases.

\noindent \textbf{Case 1}: If $R_{\alpha_{0}}(\theta) - R_{\alpha_{0}}(\theta_{0}) \leq \epsilon_{0}$, then,
\begin{align}
\begin{split} 
\nonumber R_{\alpha}(\theta) - R_{\alpha}(\theta_{0}) ={}& R_{\alpha}(\theta) - R_{\alpha_{0}}(\theta) + R_{\alpha_{0}}(\theta) \\
\nonumber &- R_{\alpha_{0}}(\theta_{0}) + R_{\alpha_{0}}(\theta_{0}) - R_{\alpha}(\theta_{0}) 
\end{split} \\
\leq{}& L_{r}(\alpha-\alpha_{0}) + \epsilon_{0} + L_{r}(\alpha-\alpha_{0}).
\end{align}
% \begin{align}
% \nonumber R_{\alpha}(\theta) - R_{\alpha}(\theta_{0}) 
% \nonumber &= R_{\alpha}(\theta) - R_{\alpha_{0}}(\theta) + R_{\alpha_{0}}(\theta) \\
% \nonumber &- R_{\alpha_{0}}(\theta_{0}) + R_{\alpha_{0}}(\theta_{0}) - R_{\alpha}(\theta_{0}) \\ 
% \nonumber &\leq L_{r}|\alpha-\alpha_{0}| + \epsilon_{0} + L_{r}|\alpha-\alpha_{0}|.
% \end{align}
Since $\epsilon_{0} +2L_{r}(\alpha-\alpha_{0}) = \epsilon$, we have $R_{\alpha}(\theta) - R_{\alpha}(\theta_{0}) \leq \epsilon$. %To make $\epsilon'$ as small as possible, we pick its value with equality.

\noindent \textbf{Case 2}:
If $R_{\alpha_{0}}(\theta) - R_{\alpha_{0}}(\theta_{0}) > \epsilon_{0}$, then, since $R_{\alpha_{0}}$~is $(\epsilon_{0},\kappa_{0},\theta_{0})$-SLQC in $\theta$ by assumption, we have that $\|\nabla R_{\alpha_{0}}(\theta)\| > 0$, and for every $\theta' \in \mathbb{B}(\theta_{0}, \rho_{0})$ it holds that $\langle \nabla R_{\alpha_{0}}(\theta), \theta_{0} - \theta \rangle \leq 0$.
By Lemma~\ref{anglelemma}, we want to prove that 
\begin{equation}
\langle-\nabla R_{\alpha}(\theta), \theta_{0} - \theta \rangle \geq \rho \|\nabla R_{\alpha}(\theta) \|,
\end{equation}
for $\rho$ given by \eqref{eq:thmrho}.
By the Cauchy-Schwarz inequality, % we have %$a,a',b$ it holds that 
% \begin{equation}
% \langle a,b \rangle \geq \langle a', b \rangle - \|a-a'\|\|b\|.
% \end{equation}
%Thus,
\begin{align}
\begin{split}
\nonumber \langle -\nabla R_{\alpha}(\theta),\theta_{0} - \theta \rangle \geq{}& \langle -\nabla R_{\alpha_{0}}(\theta), \theta_{0} - \theta \rangle \\
\nonumber &- \|\nabla R_{\alpha}(\theta) - \nabla R_{\alpha_{0}}(\theta)\|\|\theta_{0} - \theta\|  
\end{split}\\
%\nonumber \geq{}& \langle -\nabla R_{\alpha_{0}}(\theta), \theta_{0} - \theta \rangle - J_{r}(\alpha_{0}^{-1} - \alpha^{-1}) 2r \\
\geq{} \rho_{0} \|\nabla & R_{\alpha_{0}}(\theta)\| - J_{r}(\alpha_{0}^{-1} - \alpha^{-1}) 2r,
\end{align}
since $\nabla R_{\alpha}$ is $J_{r}$-Lipschitz in $\alpha^{-1}$ and $\theta_{0}-\theta \in \mathbb{B}_{d}(2r)$, and since $R_{\alpha_{0}}$ is SLQC, we apply Lemma~\ref{anglelemma}.
%\textcolor{red}{I think we have an error here. We use Lipschitz inequality (upper-bound) and lowerbound on the distance. I think instead we should have something like $\|\theta_{0} - \theta\| \leq r$}
% \begin{equation}
% \nonumber \langle -\nabla R_{\alpha}(\theta),\theta_{0} - \theta \rangle \geq \langle -\nabla R_{\alpha_{0}}(\theta), \theta_{0} - \theta \rangle - K_{r}|\alpha-\alpha_{0}| r.
% \end{equation}
For ease of notation, we temporarily let $\Delta = J_{r}(\alpha_{0}^{-1} - \alpha^{-1})$.
Continuing, we have
\begin{align}
\begin{split}
\nonumber \rho_{0} \|\nabla R_{\alpha_{0}}(\theta)\| - \Delta 2r \geq{}& \rho_{0} \|\nabla R_{\alpha}(\theta)\| - \Delta 2r \\
\nonumber &- \rho_{0} \|\nabla R_{\alpha_{0}}(\theta) - \nabla R_{\alpha}(\theta)\| 
\end{split} \\
%\begin{split}
\label{eq:thm2_2} \geq{}& \rho_{0} \|\nabla R_{\alpha}(\theta)\| - \Delta (\rho_{0} + 2r),
%\end{split}
\end{align}
% \begin{align}
% \begin{split}
% \nonumber \rho_{0} \|\nabla R_{\alpha_{0}}(\theta)\| - K_{r}|\alpha-\alpha_{0}| 2 r \geq{}& \rho_{0} \|\nabla R_{\alpha}(\theta)\| - K_{r}|\alpha-\alpha_{0}| 2r \\
% \nonumber &- \rho_{0} \|\nabla R_{\alpha_{0}}(\theta) - \nabla R_{\alpha}(\theta)\| 
% \end{split} \\
% %\begin{split}
% \nonumber \geq{}& \rho_{0} \|\nabla R_{\alpha}(\theta)\| \\
% -K&_{r}|\alpha-\alpha_{0}|(\rho_{0} + 2r),
% %\end{split}
% \end{align}
%By the reverse triangle inequality $\|x\|-\|y\| \leq \|x-y\|$ and since $\|\nabla R_{\alpha}(\theta) - \nabla R_{\alpha'}(\theta)\| \leq K_{r}|\alpha - \alpha'|$, we have that
which follows by the reverse triangle inequality and since $\nabla R_{\alpha}(\theta)$ is $J_{r}$-Lipschitz in $\alpha^{-1}$.
% \begin{align}
% \nonumber \langle -\nabla R_{\alpha}(\theta),\theta_{0} - \theta \rangle &\geq \rho_{0} \|\nabla R_{\alpha}(\theta)\| - K_{r}|\alpha-\alpha_{0}| r \\ &- \nonumber \rho_{0} \|\nabla R_{\alpha_{0}}(\theta) - \nabla R_{\alpha}(\theta)\|.
% \end{align}
% Again, since $\nabla R_{\alpha}(\theta)$ is $K_{r}$-Lipschitz in $\alpha$ we have 
% \begin{equation}
% \nonumber \langle -\nabla R_{\alpha}(\theta),\theta_{0} - \theta \rangle \geq \rho_{0} \|\nabla R_{\alpha}(\theta)\| - K_{r}|\alpha-\alpha_{0}|(\rho_{0} + r).
% \end{equation}
%Since $I > 0$ and $\nabla R_{\alpha}$ is $J_{r}$-Lipschitz in $\alpha^{-1}$, 
Further, we have that
\begin{align} \label{eq:thm2_3}
0 < I_{\alpha_{0},\epsilon_{0},r}(\theta_{0}) - J_{r}(\alpha_{0}^{-1} - \alpha^{-1}) \leq  \|\nabla R_{\alpha}(\theta)\|,
\end{align}
which follows by the reverse triangle inequality, by the fact that $\nabla R_{\alpha}(\theta)$ is $J_{r}$-Lipschitz in $\alpha^{-1}$, and the definition of $\alpha$ in \eqref{eq:radchange} since $\alpha < \alpha_{0}^{2}I_{\alpha_{0},\epsilon_{0},r}(\theta_{0})J_{r}^{-1} + \alpha_{0}$ and $\alpha_{0}^{2} \leq \alpha \alpha_{0}$.
Thus, returning to \eqref{eq:thm2_2}, we let $\Gamma = \Delta (\rho_{0} + 2r)$ and $I = I_{\alpha_{0},\epsilon_{0},r}(\theta_{0})$ for ease of notation and we have 
\begin{align}
\nonumber \rho_{0} \|\nabla &R_{\alpha}(\theta)\| - \Gamma = \|\nabla R_{\alpha}(\theta)\| \left(\rho_{0} - \frac{\Gamma}{\|\nabla R_{\alpha}(\theta)\|}\right) \\ %&\geq \|\nabla R_{\alpha}(\theta)\| \left(\rho_{0} - \frac{\Gamma}{I - J_{r}(\alpha_{0}^{-1} - \alpha^{-1})}\right), \\
\label{eq:thm2_4}&\geq  \|\nabla R_{\alpha}(\theta)\| \left(\rho_{0} - \frac{(\rho_{0}+2r)J_r}{I(\alpha_{0}^{-1} - \alpha^{-1})^{-1} - J_{r}}\right)
\end{align}
where we used the inequality in \eqref{eq:thm2_3}.
% Observe that the multiplicative factor in \eqref{eq:thm2_4} can be expanded
% \begin{align}
% \rho_{0} - \frac{\Gamma}{I_{\alpha_{0},\epsilon_{0},r}(\theta_{0})} = \rho_{0}\left(1 - \frac{K_{r}(1 + \frac{2r}{\rho_{0}})|\alpha-\alpha_{0}|}{I_{\alpha_{0},\epsilon_{0},r}(\theta_{0})}\right).
% \end{align}
Since we assume that 
\begin{align}
0 \leq \alpha-\alpha_{0} < \dfrac{\alpha_{0}^{2} I}{2J_{r}\left(1 + r \rho_{0}^{-1}\right)},
\end{align}
returning to \eqref{eq:thm2_4}, it can be shown using $\alpha_{0}^2 \leq \alpha \alpha_{0}$ that 
\begin{align}
\frac{(1+2r\rho_{0}^{-1})J_r}{I(\alpha_{0}^{-1} - \alpha^{-1})^{-1} - J_{r}} < 1.
\end{align}
Therefore, we finally obtain that 
\begin{align}
\langle-\nabla R_{\alpha}(\theta), \theta_{0} - \theta \rangle \geq \rho \|\nabla R_{\alpha}(\theta) \|,
\end{align}
where $\rho > 0$ is given by 
\begin{equation}
\rho = %\rho_{0} \left(1 - \frac{(1+\frac{2r}{\rho_{0}})K_r (\alpha - \alpha_{0})}{I_{\alpha_{0},\epsilon_{0},r}(\theta_{0})}\right),
\rho_{0} \left(1 - \frac{(1+2r\rho_{0}^{-1})J_r}{I(\alpha_{0}^{-1} - \alpha^{-1})^{-1} - J_{r}}\right)
\end{equation}
~as desired.
\end{proof}
%The claim states that the evolution for $\alpha$ increasing in a limited range specified by $1/(2K_{r})$ 
%\textcolor{blue}{Update and condense this. Remark on how $I \leq 0$ approximately quantifies saturation.} 
%
Combining Corollary~\ref{Cor:SLQClessthan1} and Theorem~\ref{thm:SLQCresult1}, we provide the following corollary which quantifies the evolution of SLQC constants for $\alpha_{0}=1$ as $\alpha$ increases.
\begin{corollary}\label{cor:SLQCresult=1}
Let $\Sigma>0$, $\alpha_{0}=1$, $\epsilon_{0}>0$, and $\theta_{0} \in \mathbb{B}_{d}(r)$.  
If 
\begin{equation}
0 \leq \alpha-1 < \dfrac{I_{\alpha_{0},\epsilon_{0},r}(\theta_{0})}{2J_{r}\left(1 + r \frac{\sigma(r)}{\epsilon_{0}}\right)},
%0 \leq \alpha-1 < \frac{I_{1,\epsilon_{0},r}(\theta_{0})}{(1 + \frac{2r\sigma(r)}{\epsilon_{0}})K_r},
\end{equation}
then $R_{{\alpha}}$ is $(\epsilon,\kappa,\theta_{0})$-SLQC in $\theta \in \mathbb{B}_{d}(r)$ with 
\begin{equation}
\epsilon = \epsilon_{0} + 2 L_{r}(\alpha-1),
\end{equation}
and 
\begin{equation} \label{eq:corrho}
%\dfrac{\epsilon}{\kappa} = \frac{\epsilon_{0}}{\sigma(r)} \left(1 - \frac{(1+\frac{2r\sigma(r)}{\epsilon_{0}})K_r(\alpha - 1)}{I_{1,\epsilon_{0},r}(\theta_{0})}\right). \\
\dfrac{\epsilon}{\kappa} = \frac{\epsilon_{0}}{\sigma(r)} \left(1  - \frac{\left(1 +2r\frac{\sigma(r)}{\epsilon_{0}}\right)J_r(\alpha-1)}{\alpha I_{\alpha_{0},\epsilon_{0},r}(\theta_{0}) - J_{r}(\alpha-1)}\right).
\end{equation}
\end{corollary}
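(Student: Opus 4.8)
The plan is to obtain this corollary as a direct specialization of Theorem~\ref{thm:SLQCresult1} to the base point $\alpha_0 = 1$, supplying the missing SLQC parameter $\kappa_0$ from Corollary~\ref{Cor:SLQClessthan1}. First I would invoke Corollary~\ref{Cor:SLQClessthan1} under the standing hypothesis $\Sigma > 0$: it guarantees that, for every $\epsilon_0 > 0$, the $1$-risk $R_1$ is $(\epsilon_0, C_{r,1}, \theta_0)$-SLQC in $\theta \in \mathbb{B}_{d}(r)$, where $C_{r,1} = \sigma(r)(1-\sigma(r))^{1-1/1} = \sigma(r)$ because the exponent $1 - 1/\alpha$ vanishes at $\alpha = 1$. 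Hence I may take $\alpha_0 = 1$ and $\kappa_0 = \sigma(r)$ as exactly the base SLQC data required as input to Theorem~\ref{thm:SLQCresult1}.

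Next I would substitute $\alpha_0 = 1$ and $\kappa_0 = \sigma(r)$ into the admissible-increment condition \eqref{eq:radchange}. Since $\alpha_0^2 = 1$ and $\kappa_0/\epsilon_0 = \sigma(r)/\epsilon_0$, the theorem's upper bound $\frac{\alpha_0^2 I_{\alpha_0,\epsilon_0,r}(\theta_0)}{2 J_r(1 + r \kappa_0/\epsilon_0)}$ collapses precisely to $\frac{I_{\alpha_0,\epsilon_0,r}(\theta_0)}{2 J_r(1 + r \sigma(r)/\epsilon_0)}$, which is the range stated in the corollary. Thus whenever the corollary's hypothesis on $\alpha - 1$ is satisfied, the hypothesis of Theorem~\ref{thm:SLQCresult1} is satisfied as well, so the theorem's conclusion applies with no further work.

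Finally I would read off the two output parameters by the same substitution. The value $\epsilon = \epsilon_0 + 2 L_r(\alpha - \alpha_0)$ becomes $\epsilon = \epsilon_0 + 2 L_r(\alpha - 1)$ upon setting $\alpha_0 = 1$. For the ratio $\epsilon/\kappa$, substituting $\alpha_0 = 1$ and $\kappa_0 = \sigma(r)$ into \eqref{eq:thmrho} turns the prefactor $\epsilon_0/\kappa_0$ into $\epsilon_0/\sigma(r)$, the factor $1 + 2 r \kappa_0/\epsilon_0$ into $1 + 2r\sigma(r)/\epsilon_0$, and the denominator $\alpha \alpha_0 I_{\alpha_0,\epsilon_0,r}(\theta_0) - J_r(\alpha-\alpha_0)$ into $\alpha I_{\alpha_0,\epsilon_0,r}(\theta_0) - J_r(\alpha-1)$, reproducing \eqref{eq:corrho} verbatim.

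Since every ingredient is already established upstream, there is no genuine analytic obstacle here; the proof is purely a matter of evaluating the general formulas of Theorem~\ref{thm:SLQCresult1} at $\alpha_0 = 1$. The only points requiring care are the evaluation $C_{r,1} = \sigma(r)$ (checking the vanishing of the exponent makes the $(1-\sigma(r))$ factor equal to one) and the verification that the algebraic collapses of the condition and of $\epsilon/\kappa$ are exact equalities rather than mere approximations. I expect the sole place where a slip could occur is the bookkeeping that distinguishes quantities depending on $\alpha$ from those evaluated at the fixed base point $\alpha_0 = 1$.
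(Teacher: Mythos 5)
Your proposal is correct and matches the paper's own route exactly: the paper presents Corollary~\ref{cor:SLQCresult=1} precisely as the combination of Corollary~\ref{Cor:SLQClessthan1} (which, under $\Sigma>0$, supplies the base SLQC data $\kappa_{0}=C_{r,1}=\sigma(r)(1-\sigma(r))^{0}=\sigma(r)$ for every $\epsilon_{0}>0$) with Theorem~\ref{thm:SLQCresult1} evaluated at $\alpha_{0}=1$. Your substitutions into \eqref{eq:radchange}, the formula for $\epsilon$, and \eqref{eq:thmrho} are all exact and reproduce the corollary's statement.
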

\begin{figure}[h]
    \centering
    \centerline{\includegraphics[width=1\linewidth]{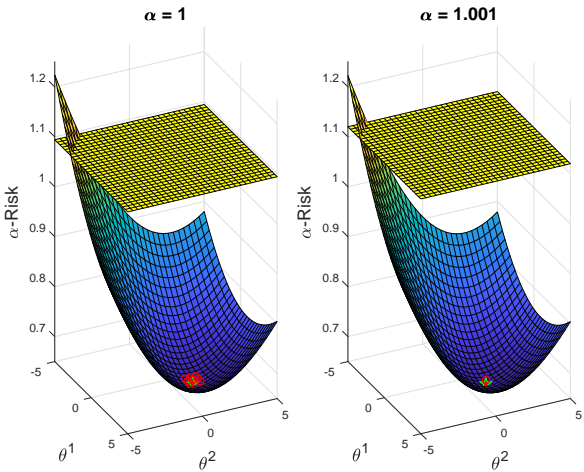}}
    \caption{The landscape of $\alpha$-loss, $R_{\alpha}$ ($\alpha = 1, 1.001$) in the logistic model, where the features are normalized and $r = 5$, for a 2D-GMM with $\mathbb{P}[Y=1]=\mathbb{P}[Y=-1]$, $\mu_{X|Y=-1} = [0.4,0.4]$, $\mu_{X|Y=1} = [1,1]$, $\Sigma = [3, 0.2; 0.2, 1.5]$. For $\alpha=1$, the red region depicts $\epsilon_{0}/\kappa_{0}$ which is calculated using Corollary~\ref{Cor:SLQClessthan1} about $\theta_{0}$, where $\theta_{0}$ is set to be the global minimum of $R_{1}$ and is depicted by the star; for illustrative purposes, we set $\epsilon_{0} = 0.4$ and it is depicted by the yellow plane. For $\alpha=1.001$, the red region depicts $\epsilon/\kappa$ about $\theta_{0}$ (the star) and $\epsilon$ is also depicted by the yellow plane; both quantities approximate the bounds given by Corollary~\ref{cor:SLQCresult=1}.} \label{fig:2DLandscape_ball}
\end{figure}
An illustration of the degradation of SLQC constants as specified by Corollary~\ref{cor:SLQCresult=1} for increasing $\alpha$ is presented in Figure~\ref{fig:2DLandscape_ball} for a two-dimensional GMM.
Intuitively, we find that for a fixed $\theta_{0} \in \mathbb{B}_{d}(r)$, increasing $\alpha$ is equivalent to reducing the radius of the $\epsilon/\kappa$ ball about $\theta_{0}$ and increasing the value of $\epsilon$. Both of these effects hinder the optimization process and increase the required number of iterations of NGD as stated in Proposition~\ref{prop:NGDiterations}. 
%Furthermore, if $I_{\alpha_{0},\epsilon_{0},r}(\theta_{0}) \leq 0$ for any $\epsilon_{0}>0$, then evolution of the SLQC parameters is not achievable, \textit{i.e.}, we say that \textit{saturation} has occurred.

% With Corollary~\ref{Cor:SLQClessthan1} and Theorem~\ref{thm:SLQCresult1}, we provide explicit bounds to quantify the evolution of the SLQC parameters of $R_{\alpha}$ in the logistic model for all $\alpha \in (0,\infty]$. These bounds corroborate the intuition that the optimization landscape becomes more non-convex as $\alpha$ increases, and hence the computational complexity of NGD increases as well.
% Ideally, the practitioner would like to take $\alpha=\infty$ to obtain more accuracy, as the risk becomes probability of error in this case. While our bounds and Proposition~\ref{prop:NGDiterations} suggest that the optimization complexity of NGD for large $\alpha$ becomes infeasible, using a moderately large $\alpha$ leads to a similar performance as $\alpha=\infty$; we term this the saturation phenomenon of $\alpha$-loss.

%With Theorem \ref{thm:SLQCresult1}, we have another perspective on the intractability of the 0-1 loss since the computational complexity of NGD increases as $\alpha$ increases and tends more towards the 0-1 loss. 
While the learning practitioner would ultimately like to approximate the intractable 0-1 loss (approximated by $\alpha = \infty$), 
the bounds presented in Theorem~\ref{thm:SLQCresult1} suggest that the computational complexity of NGD quickly worsens as $\alpha$ increases.
Fortunately, in the logistic model, $\alpha$-loss exhibits a saturation effect whereby smaller values of $\alpha$ resemble the landscape induced by $\alpha = \infty$.
More concretely, the saturation effect of $\alpha$-loss is the fact that the uniform distance between $R_{\alpha}$ and $R_{\infty}$ decreases geometrically in $\alpha$ as summarized by the following lemma.
%The following result shows that saturation in the logistic model occurs at a geometric rate; see Figure \ref{fig:saturation}.

\begin{lemma} \label{lemma:saturationLR}
%Let $\theta\in\mathbb{B}_d(r)$ be fixed. 
If $\alpha,\alpha'\in[1,\infty]$, then for all $\theta \in \mathbb{B}_{d}(r)$,
\begin{equation}
|R_{\alpha}(\theta) - R_{\alpha'}(\theta)| \leq L_{r} \left|\frac{1}{\alpha} - \frac{1}{\alpha'}\right|,
\end{equation}
where $L_{r}$ is given in \eqref{eq:DefLr}.
\end{lemma}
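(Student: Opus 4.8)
The plan is to establish the bound pointwise in the data and then integrate. Since $R_\alpha(\theta) = \mathbb{E}_{X,Y}[l^\alpha(Y,g_\theta(X))]$, by linearity of expectation and Jensen's inequality it suffices to prove the per-example inequality $|l^\alpha(y,g_\theta(x)) - l^{\alpha'}(y,g_\theta(x))| \le L_r|1/\alpha - 1/\alpha'|$ uniformly over $x \in \mathbb{B}_d(1)$, $y \in \{-1,+1\}$, and $\theta \in \mathbb{B}_d(r)$; taking the expectation then yields the lemma. The crucial structural fact I would exploit is that the per-example loss depends on $(x,y,\theta)$ only through the scalar $u := g_\theta(yx) = \sigma(y\langle\theta,x\rangle)$, and that the constraints $\|\theta\| \le r$ and $\|x\| \le 1$ force $y\langle\theta,x\rangle \in [-r,r]$, hence $u \in [\sigma(-r),\sigma(r)]$.

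Next I would reparametrize by $s = 1/\alpha \in [0,1]$ and write the per-example loss as $\phi(s) := (1 - u^{1-s})/(1-s)$, extended continuously to $\phi(1) = -\log u$. The goal becomes a uniform derivative bound $|\phi'(s)| \le L_r$, which by the mean value theorem gives $|\phi(s) - \phi(s')| \le L_r|s-s'| = L_r|1/\alpha - 1/\alpha'|$. Writing $\beta = 1-s$ and $t = -\log u \ge 0$ (so that $u^{1-s} = e^{-\beta t}$), a direct differentiation gives $\phi'(s) = -\big((w+1)e^{-w} - 1\big)/\beta^2$ with $w = \beta t$, so that $|\phi'(s)| = \big(1 - (w+1)e^{-w}\big)/\beta^2$.

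The heart of the argument is the elementary inequality $1 - (w+1)e^{-w} \le w^2/2$ for all $w \ge 0$: this follows by noting that $q(w) := w^2/2 - 1 + (w+1)e^{-w}$ satisfies $q(0) = 0$ and $q'(w) = w(1 - e^{-w}) \ge 0$. Substituting $w = \beta t$ collapses the $1/\beta^2$ factor against the $w^2 = \beta^2 t^2$ numerator and yields $|\phi'(s)| \le t^2/2$. It remains to bound the margin-dependent quantity $t = -\log u$; since $u \ge \sigma(-r) = (1+e^r)^{-1}$, we get $t \le \log(1+e^r) \le r + \log 2$, using $1 + e^r \le 2e^r$ for $r \ge 0$. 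Hence $|\phi'(s)| \le (r+\log 2)^2/2 = L_r$, completing the uniform bound.

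I expect the main obstacle to be the uniform derivative estimate rather than the reduction: one must verify the elementary inequality above and, more delicately, check that the apparent singularity of $\phi'$ at $s=1$ (where $\beta \to 0$) is removable, since the $w^2 = \beta^2 t^2$ numerator cancels the $1/\beta^2$ factor and $\phi'(s) \to t^2/2$. This ensures $\phi$ is genuinely $C^1$ on the whole closed interval $[0,1]$, including the endpoints $s=1$ ($\alpha=1$, log-loss) and $s=0$ ($\alpha=\infty$, soft $0$-$1$ loss), so the mean value theorem is legitimately applicable. A minor bookkeeping point is that $\sigma(-r) = 1 - \sigma(r)$, which validates the margin bound $t \le \log(1+e^r)$; bounding $\log(1+e^r) \le r + \log 2$ then produces the clean (if slightly conservative) constant $L_r$.
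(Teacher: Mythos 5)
Your proof is correct and complete: the reduction to a pointwise bound in $s = 1/\alpha$, the derivative bound $|\phi'(s)| \le t^2/2$ via the elementary inequality $1-(1+w)e^{-w} \le w^2/2$, the careful handling of the removable singularity at $s=1$, and the margin bound $t \le \log(1+e^r) \le r + \log 2$ are all sound. The paper itself defers this proof to its extended version, but your argument is essentially the intended one--indeed it is exactly the route that produces the paper's constant $L_r = (r+\log 2)^2/2$.
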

%The proof of Lemma \ref{inversealphalemma} can be found in \cite{sypherd2019alpha}.
% The proof of Lemma~\ref{lemma:saturationLR} follows similarly from the computation of $L_r$ in \eqref{eq:DefLr}. For a practical interpretation, suppose that $\alpha' = \infty$ and $r = 1$. For $\alpha = 10$, we have that
% \begin{equation}
% |R_{10}(\theta) - R_{\infty}(\theta)| \leq \frac{1}{10}.
% \end{equation}
% Therefore, $R_{10}(\theta)$ is uniformly upper-bounded from the probability of error by $1/10$. 
See Figure~\ref{fig:saturation} for an illustration which depicts how quickly the landscape for $\alpha > 1$ resembles the $\alpha = \infty$ landscape.
\begin{figure}[h]
    \centering
    \centerline{\includegraphics[width=1\linewidth]{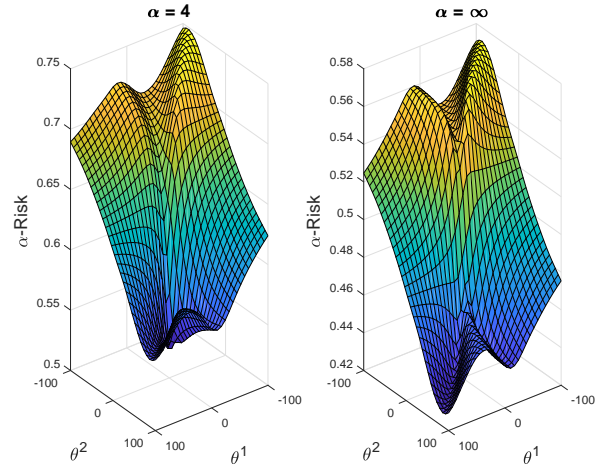}}
    %\includegraphics[scale=.3]{wellevolution.eps}
   % \caption{An illustration of the saturation effect of $\alpha$-loss ($R_{\alpha}$ for $\alpha = 10, \infty$) in the logistic model, where the features are normalized and $r = 100$, for a 2D-GMM with $\mathbb{P}[Y=-1] = 0.11 = 1-\mathbb{P}[Y=1]$, $\mu_{X|Y=-1} =[0.95, -0.99]$, $\mu_{X|Y=1} = [-0.60,0.87]$, $\Sigma_{-1} = [1.14, 1.55; 1.55, 4.41]$, and $\Sigma_{1} = [4.24, 2.35;  2.35,1.36]$. Note the small difference, uniformly over the parameter space, between $R_{10}$ and $R_{\infty}$.}
   \caption{An illustration of the saturation effect of $\alpha$-loss ($R_{\alpha}$ for $\alpha = 4, \infty$) in the logistic model, where features are normalized and $r = 100$, for a 2D-GMM 
   with $\mathbb{P}[Y=-1] = 0.61 = 1 - \mathbb{P}[Y=1]$, $\mu_{X|Y=-1} =[-0.14, 0.21]$, $\mu_{X|Y=1} = [0.06,0.43]$, $\Sigma_{-1} = [0.38, 0.25; 0.25, 3.17]$, and $\Sigma_{1} = [2.07, -1.62;  -1.62, 1.97]$.}
   %
   %
   %with $\mathbb{P}[Y=1]=\mathbb{P}[Y=-1]$, $\mu_{X|Y=-1} =[-.91, .50]$, $\mu_{X|Y=1} = [-.27,.20]$, $\Sigma = [1.38, .55; .55, 2.18]$. Note the small difference, uniformly over the parameter space, between $R_{10}$ and $R_{\infty}$.}
    \label{fig:saturation}
\end{figure}

% Thus, the degradation of the SLQC constants can be mitigated by the fact that one does not have to increase $\alpha$ too much to achieve comparable performance to the probability of error; for experimental support, see \cite{sypherd2019alpha}. %See Figure \ref{fig:saturation} for an illustration of the saturation phenomenon.

% \begin{figure}[h]
%     \centering
%     %\includegraphics[scale=0.35]{9.png}
%     %\includegraphics[scale=0.35]{1.png}
%     \centerline{\includegraphics[width=1\linewidth]{saturate.eps}}
%     %\includegraphics[scale=.3]{wellevolution.eps}
%     \caption{An illustration of the saturation phenomenon of $\alpha$-loss ($R_{\alpha}$ for $\alpha = 10, \infty$) in the logistic model for $r = 10$ and for a 2D-GMM with $\mathbb{P}[Y=1]=\mathbb{P}[Y=-1]$, $\mu_{X|Y=-1} =[-.91, .50]$, $\mu_{X|Y=1} = [-.27,.20]$, $\Sigma = [1.38, .55; .55, 2.18]$. Note the small difference, uniformly over the parameter space, between $R_{10}$ and $R_{\infty}$.}
%     \label{fig:saturation}
% \end{figure}

\section{Concluding Remarks}
%%!TEX root = main.tex
%\section{Concluding Remarks} 
%
 %See Figure \ref{fig:saturation} for an illustration of the saturation
In this work, we analyze the evolution of the $\alpha$-loss landscape in the logistic model by examining different regimes of $\alpha$. 
As $\alpha$ approaches zero, $R_{\alpha}$ becomes more strongly convex (see Theorem~\ref{Thm:SLQClessthan1}), but the computational complexity of NGD increases since the Lipschitz constant of $R_{\alpha}$ grows. 
As $\alpha$ approaches infinity, $R_{\alpha}$ becomes more non-convex since SLQC parameters degrade (see Theorem~\ref{thm:SLQCresult1}), which also increases the computational complexity of NGD; however, accuracy increases since the landscape of $\alpha$-loss tends towards that of the $\infty$-loss, \textit{i.e.}, the 0-1 loss.
Combining Corollary~\ref{Cor:SLQClessthan1} and Theorem~\ref{thm:SLQCresult1}, we provide explicit bounds to quantify the evolution of SLQC parameters of $R_{\alpha}$ in the logistic model for $\alpha$ in a neighborhood of $1$ (see Corollary \ref{cor:SLQCresult=1}). 
%As $\alpha$ approaches zero, we prove that while $R_{\alpha}$ becomes more strongly convex, the computational complexity of NGD increases since the Lipschitz constant of $R_{\alpha}$ grows. 
%Similarly, as $\alpha$ approaches infinity, $R_{\alpha}$ becomes more non-convex which also increases the computational complexity of NGD, though accuracy increases since the landscape tends more towards the probability of error.
%These bounds corroborate the intuition that the optimization landscape becomes more non-convex as $\alpha$ increases, and hence the computational complexity of NGD increases as well.
%Since the practitioner would ideally like to take $\alpha=\infty$ to obtain more accuracy, 
Using a moderately large $\alpha$, $\alpha$-loss leads to similar performance as the desired, and computationally harder to optimize, 0-1 loss (see Lemma~\ref{lemma:saturationLR}); this is a saturation effect of $\alpha$-loss in the logistic model.
%Therefore, due to the explosion of the gradients as $\alpha$ tends towards zero and the increasingly non-convex landscape as $\alpha$ tends towards infinity, a narrow range of $\alpha$ is computationally feasible, but, the saturation effect suggests it is sufficient.
Therefore, for the logistic model, we ultimately posit that there is a narrow range of $\alpha$ useful to the practitioner.

\bibliographystyle{IEEEtran}
\bibliography{TS_ML}
%\section{Future Work}
%We will ask questions about the robustness of alpha loss to noisy labels in classification tasks. We will relate the minimum conditional risk to the performance of an algorithm, considering the tunable behavior of alpha loss.
%\textbf{Comment:} Notice that for positive values of the margin, alpha loss is essentially the same. Thus, when choosing $\alpha$, we are really discriminating between the ways that we want our loss function to handle misclassified examples. Do we want alpha loss to have bounded loss, semi-bounded, or unbounded for these examples?
%Can alpha loss perform like Savage Loss?
%Alpha loss may struggle at larger values of $\alpha$ due to vanishing gradients. We will explore the role of $\alpha$, the noise level, and sample complexity in the future to help us determine $\alpha$. 
\newpage

\end{document}